\newcommand{\tabincell}[2]{\begin{tabular}{@{}#1@{}}#2\end{tabular}}
\newtheorem{thm}{Theorem}[section]
\numberwithin{thm}{section}
\newtheorem{lem}[thm]{Lemma}
\newtheorem{proposition}[thm]{Proposition}
\DeclarePairedDelimiterX{\set}[1]{\{}{\}}{\setargs{#1}}
\NewDocumentCommand{\setargs}{>{\SplitArgument{1}{;}}m}
{\setargsaux#1}
\NewDocumentCommand{\setargsaux}{mm}
{\IfNoValueTF{#2}{#1} {#1\,\delimsize|\,\mathopen{}#2}}
\definecolor{Light}{rgb}{0.99, 0.92, 0.95}
\definecolor{deemph}{gray}{0.0}
\definecolor{bblue}{rgb}{1,1,1}
\definecolor{upcolor}{RGB}{57,182,74}
\newcommand{\up}[1]{\small \textcolor{upcolor}{$\uparrow$#1}}
\begin{document}
\title{MIMIR: Masked Image Modeling for Mutual Information-based Adversarial Robustness}

\author{\IEEEauthorblockN{Xiaoyun Xu$^{1}$,
Shujian Yu$^{2}$,
Zhuoran Liu$^{1}$ 
and Stjepan Picek$^{3,1}$}
\IEEEauthorblockA{$^{1}$Radboud University Nijmegen, The Netherlands}
\IEEEauthorblockA{$^{2}$Vrije Universiteit Amsterdam, The Netherlands}
\IEEEauthorblockA{$^{3}$University of Zagreb Faculty of Electrical Engineering and Computing, Croatia\\
Email: xiaoyun.xu@ru.nl, s.yu3@vu.nl, z.liu@cs.ru.nl, stjepan.picek@ru.nl}}

\IEEEoverridecommandlockouts
\makeatletter\def\@IEEEpubidpullup{6.5\baselineskip}\makeatother
\IEEEpubid{\parbox{\columnwidth}{
		Network and Distributed System Security (NDSS) Symposium 2026\\
		23 - 27 February 2026 , San Diego, CA, USA\\
		ISBN 979-8-9919276-8-0\\  
		https://dx.doi.org/10.14722/ndss.2026.241813\\
		www.ndss-symposium.org
}
\hspace{\columnsep}\makebox[\columnwidth]{}}

\maketitle

\begin{abstract}
Vision Transformers (ViTs) have emerged as a fundamental architecture and serve as the backbone of modern vision-language models. Despite their impressive performance, ViTs exhibit notable vulnerability to evasion attacks, necessitating the development of specialized Adversarial Training (AT) strategies tailored to their unique architecture. 
While a direct solution might involve applying existing AT methods to ViTs, our analysis reveals significant incompatibilities, particularly with state-of-the-art (SOTA) approaches such as Generalist~\cite{Wang2023Generalist} (CVPR 2023) and DBAT~\cite{levi2024dbat} (USENIX Security 2024).
This paper presents a systematic investigation of adversarial robustness in ViTs and provides a novel theoretical Mutual Information (MI) analysis in its autoencoder-based self-supervised pre-training.
Specifically, we show that MI between the adversarial example and its latent representation in ViT-based autoencoders should be constrained via derived MI bounds.
Building on this insight, we propose a self-supervised AT method, MIMIR, that employs an MI penalty to facilitate adversarial pre-training by masked image modeling with autoencoders.
Extensive experiments on CIFAR-10, Tiny-ImageNet, and ImageNet-1K show that MIMIR can consistently provide improved natural and robust accuracy, where MIMIR outperforms SOTA AT results on ImageNet-1K. 
Notably, MIMIR demonstrates superior robustness against unforeseen attacks and common corruption data and can also withstand adaptive attacks where the adversary possesses full knowledge of the defense mechanism.
Our code and trained models are publicly available at: \url{https://github.com/xiaoyunxxy/MIMIR}.
\end{abstract}

\IEEEpeerreviewmaketitle

\section{Introduction}
\label{sec:intro}

ViTs~\cite{alexey2021vit} and their variants~\cite{Liu_2021_ICCV,pmlr-v139-d-ascoli21a} have achieved substantial progress and serve as foundational components in modern vision-language models. Prominent multimodal frameworks, including CLIP~\cite{radford2021clip}, BLIP~\cite{li2022blip}, and Mini-GPT4~\cite{zhu2024minigpt}, typically employ ViTs as their image encoders.
However, similar to convolutional neural networks (CNNs), attention-based models provide limited robustness against evasion attacks~\cite{NEURIPS2021_e19347e1, aldahdooh2021reveal, NEURIPS2022_760b5def, 10.1007/978-3-031-19778-9_18, 10.1007/978-3-031-19775-8_24}.
Evasion attacks~\cite{10.1007/978-3-642-40994-3_25,szegedy2013intriguing} (also known as adversarial attacks), where well-trained deep models are fooled by introducing human-imperceptible perturbations to inputs, remain a persistent challenge in deep learning security.
In 2024, the National Institute of Standards and Technology (NIST) explicitly listed adversarial attacks as a significant threat to AI systems, and pointed out the importance of conducting robustness testing and mitigation, such as adversarial training and formal verification, when deploying AI tools~\cite{nist2024adversarialml}.
Nevertheless, improving adversarial robustness remains a difficult task, where even
SOTA methods, such as~\cite{NEURIPS2023_2d3b0076,peng2023robarch,bai2024mixednuts}, achieve only marginal robustness gains, commonly below 2\% compared with those before them.

So far, Adversarial Training (AT) is widely recognized as the most practically effective defense~\cite{10.1007/978-3-031-19778-9_18,NEURIPS2022_760b5def,10136149} against evasion attacks.
AT operates by augmenting the training dataset with adversarially perturbed samples~\cite{DBLP:conf/iclr/MadryMSTV18}, yet introduces two key limitations: (1) substantial computational overhead due to the generation of adversarial examples during training~\cite{DBLP:conf/iclr/MadryMSTV18}, and (2) a potential degradation in natural accuracy~\cite{raghunathan2019adversarial}.
Numerous methods have been proposed to mitigate these challenges.
Techniques such as FreeAT~\cite{NEURIPS2019_7503cfac} optimize efficiency by reusing gradient information during adversarial example generation, while FastAT~\cite{Wong2020Fast} employs an improved Fast Gradient Sign Method (FGSM) to accelerate training.
TRADES~\cite{zhang2019trades}, SCORE~\cite{pang2022robustness}, Generalist~\cite{Wang2023Generalist}, and DBAT~\cite{levi2024dbat} explore how to achieve the best trade-off between natural and robust accuracy.
Additionally, pre-training strategies have also been leveraged to enhance the performance of AT~\cite{NEURIPS2019_a2b15837,NEURIPS2023_2d3b0076}.

Applying existing AT methods to ViTs presents unique challenges due to the fundamental differences between attention-based architectures and CNNs. 
Unlike CNNs, ViTs lack inductive biases~\cite{alexey2021vit}, including locality, two-dimensional neighborhood structure, and translation equivariance. 
These biases are inherent to CNNs as prior knowledge, enabling efficient learning with limited data, whereas ViTs typically require larger training datasets to achieve comparable generalization performance~\cite{alexey2021vit}.
Consequently, AT for ViTs entails significantly higher computational costs.
Initial research on AT for ViTs explored their unique attention mechanism.
For instance, robustness can be improved by randomly dropping gradients according to attention~\cite{NEURIPS2022_760b5def} or improving training efficiency by dropping low-attention image embeddings~\cite{10.1007/978-3-031-19778-9_18}.
The majority of recent works have focused on adapting CNN-based AT methodologies to ViTs, given AT's success in building robust CNNs.
Unfortunately, standard CNN AT techniques are not fully transferable to ViTs.
Empirical studies~\cite{NEURIPS2021_e19347e1,10136149} reveal that \emph{strong data augmentations} (such as Randaugment~\cite{Cubuk_2020_CVPR_Workshops}, CutMix~\cite{Yun_2019_ICCV}, and MixUp~\cite{DBLP:conf/iclr/ZhangCDL18}, which improve robustness in CNNs) often degrade AT performance for ViTs.
To mitigate this, recent work~\cite{NEURIPS2021_e19347e1} suggests progressively increasing augmentation intensity (e.g., distortion magnitudes in RandAugment or the sampling probability of MixUp/CutMix) during training.
Furthermore, SOTA AT strategies, such as Generalist~\cite{Wang2023Generalist} and DBAT~\cite{levi2024dbat}, are less effective when applied to ViTs (see Table~\ref{tab:cifar_at}), and there is a lack of evaluation on large datasets, such as ImageNet-1K, which further limits their generalizability.

Pre-training has emerged as a complementary approach to ViT AT, with studies showing that adversarial fine-tuning of naturally pre-trained models can enhance robustness~\cite{NEURIPS2022_760b5def,NEURIPS2023_2d3b0076}. AdvXL~\cite{wang2024revisiting} notably advanced this paradigm by developing efficient AT for web-scale datasets. However, the mechanisms underlying pre-training's effectiveness are not fully understood, and results are inconsistent across implementations. For instance, models pre-trained on ImageNet-21K using SimMIM~\cite{Xie_2022_CVPR} demonstrate comparable performance to scratch-trained counterparts, while CLIP~\cite{radford2021clip} pre-training has been shown to degrade performance in some configurations~\cite{liu2024comprehensive}.

While previous methods of ViT AT, such as~\cite{NEURIPS2023_2d3b0076,10136149,NEURIPS2021_e19347e1}, focus on searching for better combinations of training hyperparameters, they suffer from performance drops across different architectures and datasets. In contrast, we aim for a generalizable method via pre-training.
Specifically, this work presents a systematic investigation of self-supervised pre-training for ViT robustness through the lens of Mutual Information (MI) and Information Bottleneck (IB) theory.
IB introduces a joint objective of simultaneously minimizing the MI between inputs and latent features while maximizing the MI between labels and latent features to mitigate the impact of the adversarial noise in the inputs.
Regarding the ViT AT, we develop a novel theoretical justification for self-supervised autoencoders, demonstrating that reducing MI between inputs and latent features enhances ViT robustness.
Based on this finding, we propose a theoretically grounded adversarial pre-training method, \textbf{M}asked \textbf{I}mage \textbf{M}odeling for Mutual \textbf{I}nformation-based Adversarial \textbf{R}obustness (\textbf{MIMIR}).
\footnote{Mimir is a figure in Norse mythology, renowned for his knowledge and wisdom.}
Specifically, we convert Masked Image Modeling (MIM) into an effective and efficient adversarial pre-training method.
The basic idea is to predict the masked content of inputs, which is a self-supervised learning task.
The effectiveness of MIMIR is analyzed and guaranteed by our theoretical justification.
The efficiency comes from discarding the masked content (75\% of image patches are discarded in our experiments), which greatly reduces the computing requirements.
Figure~\ref{fig:method} provides an illustrative diagram of MIMIR.

We validate MIMIR's effectiveness through extensive experiments on CIFAR-10~\cite{krizhevsky2009learning}, Tiny-ImageNet~\cite{le2015tiny}, and ImageNet-1K~\cite{5206848}, showing consistent improvements in both natural and adversarial accuracy.
In addition, we test the generalizability of MIMIR by combining MIMIR with three MIM methods for three representative architectures, including MAE~\cite{He_2022_CVPR} for ViTs, Group Window Attention~\cite{huang2022green} for hierarchical transformer (Swin~\cite{Liu2021swin}), and SparK~\cite{tian2023spark} for CNN (ConvNext~\cite{liu2022convnext}), where MIMIR outperforms SOTA AT methods on ImageNet-1K.
Our main contributions are:
\begin{compactitem}
    \item By revisiting the current ViT AT strategies, we point out that ViT AT methods compromise natural accuracy and lack systematic study. To this end, we provide a theoretical analysis using adversarial examples and the Information Bottleneck. We theoretically show that the Mutual Information between adversarial examples and the learned latent representation should be decreased for better robustness.
    \item Based on our analysis, we propose a self-supervised defense, MIMIR, against adversarial attacks on ViTs.
    We evaluate MIMIR using multiple architectures on three datasets under various adversarial attacks, demonstrating its effectiveness.
    MIMIR achieves SOTA adversarial robustness on ImageNet-1K following the standardized evaluation by RobustBench.\footnote{\url{https://robustbench.github.io/}}
    \item We show that MIMIR is robust against unforeseen attacks and common corrupted data (ImageNet-C~\cite{hendrycks2019robustness}) and can resist adaptive attacks where the adversary is aware of MIMIR's design. 
    
\end{compactitem}

\begin{figure}[t]
\centering
\includegraphics[width = 1\linewidth]{./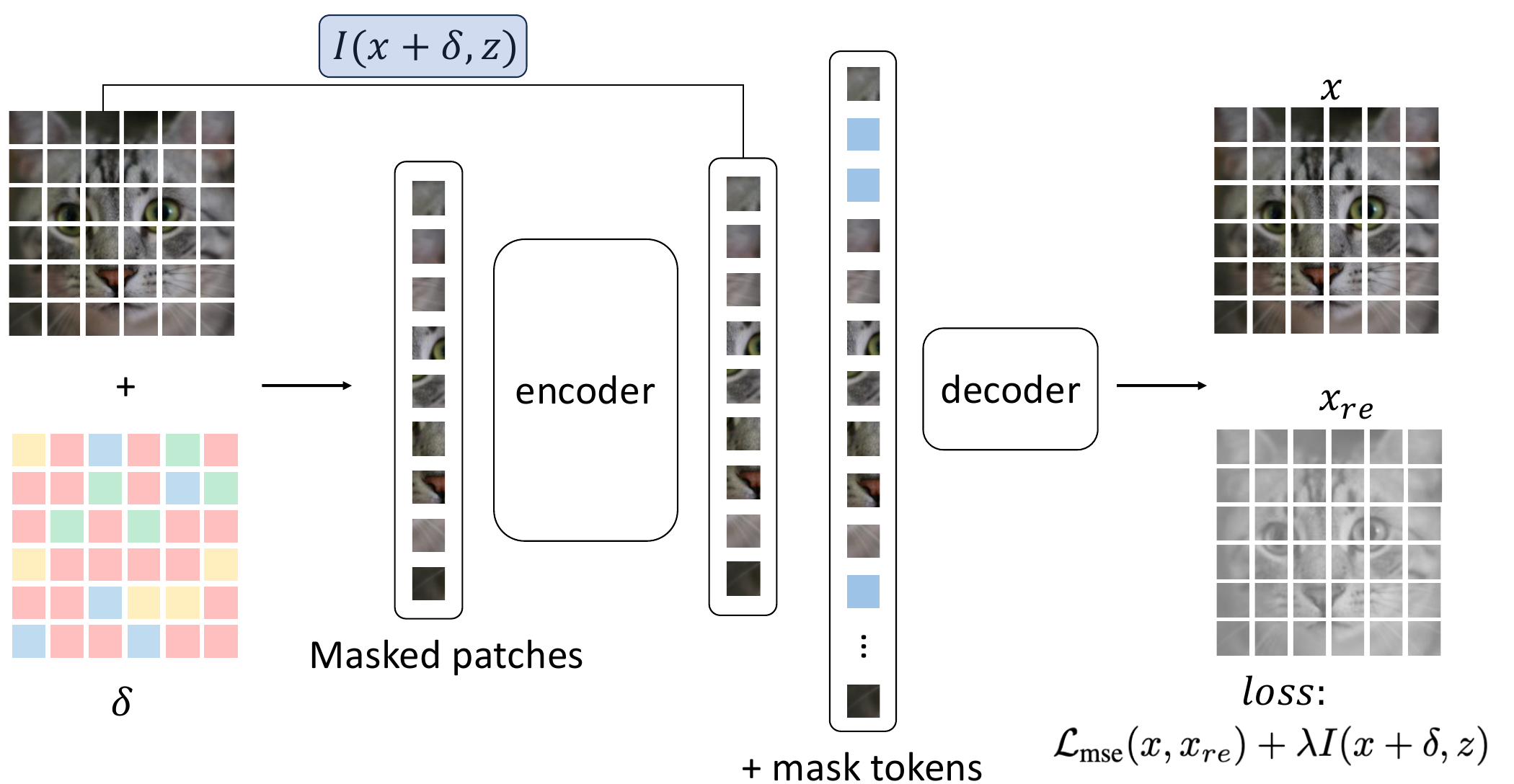}
\caption{Diagram illustrating the working mechanism of MIMIR. 
In the pre-training, adversarial images $x + \delta$ are generated and separated into image patches as the inputs of the encoder. The output of decoder $x_{re}$ and the natural input image $x$ are used to calculate the loss. 
After pre-training, a trained encoder is combined with a randomly initialized classification layer as the final complete model. Then the complete model is further fine-tuned for classification tasks.
}
\label{fig:method}
\end{figure}

\section{Background}
\label{sec:background}

\subsection{Evasion Attack}
Evasion attacks~\cite{10.1145/1014052.1014066,10.1007/978-3-642-40994-3_25,szegedy2013intriguing}, also known as adversarial attacks, refer to applying imperceptible perturbations to the original input of the machine learning model (in this work, neural network), which generates adversarial examples to fool the victim model.
Given an $L$-layer neural network $F_\theta$ for classification in $d_Y$-dimensional space and a training dataset $D = \set{(x_i, y_i)}_{i=1}^{n}$ in $d_X$-dimensional space, the two primary goals of adversarial attacks are:
\begin{compactenum}
    \item The generated perturbation $\delta$ can successfully mislead the network by maximizing (e.g., PGD attack~\cite{DBLP:conf/iclr/MadryMSTV18}): 
\begin{equation}\label{eq:pgdmax}
\mathop{\max}\limits_{\delta \in S}\mathcal{L}_{CE}(\theta, x_i+\delta, y_i),
\end{equation}
where $x_i \in \mathbb{R}^{d_X}$ and $y_i \in \set{0,1}^{d_Y}$, 
$\theta$ are the parameters of the current network and $\mathcal{L}_{CE}$ is the standard CE (Cross-Entropy) loss. The perturbation aims to decrease confidence in the ground truth labels while increasing confidence in wrong labels. Thus, the loss between the misleading outputs and the ground truth labels increases. 
\item The generated adversarial examples are as similar as possible to the original clean examples by limiting $\delta$ to a relatively small domain:
\begin{equation}
\label{eq:ballfunction}
S = B(x_i, r) = \set{ \delta \in \mathbb{R}^{d_X} : \left \| \delta \right\|_\infty \leq r },
\end{equation}
where $S$ is the $l_\infty$-ball of radius $r$ at a position $x_i$ in $\mathbb{R}^{d_X}$.
The distance between $x_i$ and $x_i+\delta$ can be evaluated by norms such as $l_2$ and $l_\infty$.
\end{compactenum}

When using the above attacks to generate adversarial examples for AT, the learning objective is:
\begin{equation}\label{eq:adversarialtrainingpgdminmax}
	\mathop{\min}\limits_{\theta} \mathop{\max}\limits_{\delta \in S} \mathcal{L}_{CE}(\theta, x_i+\delta, y_i).
\end{equation}

\subsection{Masked Image Modeling - MIM}
MIM refers to a self-supervised pre-training framework that aims to reconstruct pre-defined targets, such as discrete tokens~\cite{bao2022beit}, raw RGB pixels~\cite{He_2022_CVPR,Li2022ummae}, or features~\cite{Wei_2022_CVPR}.
The final goal is to use the pre-trained model as a starting point for downstream fine-tuning. 
The downstream tasks include, for instance, classification and object detection.
More specifically, to build a high-performance ViT $f_e$ without a classification layer, we consider $f_e$ as an encoder to extract discriminative input features.
Then, we design a lightweight decoder $f_d$, which uses the output of $f_e$ as its input.
The goal of $f_d$ is to reconstruct the original inputs (let us consider MAE~\cite{He_2022_CVPR} as an example).
The aim is to decrease the distance between the input $x$ and $x_{re} = f_d \circ f_e(x)$.
After the encoder $f_e$ and decoder $f_d$ are trained, we use $f_e$ plus a manually initialized classification layer as the starting point of fine-tuning.

\subsection{Mutual Information - MI}
MI measures the mutual dependence between two random variables, $X$ and $Y$. 
It quantifies the amount of information contained in one random variable about another random variable or the reduced uncertainty of a random variable when another random variable is known.
It can be written as:
\begin{equation}\label{eq:mi}
	I(X, Y) = \int_{\mathcal{Y}}\int_{\mathcal{X}}P_{(X,Y)}(x, y) \log \left( \frac{P_{(X,Y)}(x, y)}{P_{(X)}(x)P_{(Y)}(y)} \right),
\end{equation}
where $P_{(X,Y)}$ is the joint probability density function of $X$ and $Y$.
$P_{(X)}$ and $P_{(Y)}$ are the marginal probability density function of $X$ and $Y$.
MI can be equivalently expressed as:
\begin{equation}
    \begin{aligned}
        I(X, Y) &= H(X) - H(X|Y).
    \end{aligned}
\end{equation}

Estimating the exact MI is not easy, as it is difficult to precisely estimate $P_{X, Y}$ or $P_{X}$ and $P_{Y}$ in high-dimensional space~\cite{9082644}.
In practice, Deep Deterministic Information Bottleneck (DIB)~\cite{9414151} suggested using the matrix-based Renyi's $\alpha$-order entropy $I_{\alpha}$~\cite{6954500,8787866} to estimate MI, which avoids density estimation and variational approximation.
An alternative way is the Hilbert-Schmidt Independence Criterion (HSIC)~\cite{10.1007/11564089_7}, which is a kernel-based dependence measure defined in a reproducing kernel Hilbert space (RKHS) and is usually used as a surrogate of MI.
Details about definitions and empirical estimators of $I_{\alpha}$ and HSIC are provided in Appendix~\ref{sec:mi}. 
In this paper, we evaluate both $I_{\alpha}$ and HSIC as our MI measurements.

\subsection{Information Bottleneck - IB}
The IB concepts were first proposed in~\cite{DBLP:journals/corr/physics-0004057} and further developed for deep learning in~\cite{7133169, DBLP:journals/corr/Shwartz-ZivT17}. 
IB describes the generalization of a deep network in two phases:
1) empirical error minimization (ERM) and 2) representation compression~\cite{DBLP:journals/corr/Shwartz-ZivT17}.
For a network with input $x$ and label $y$, there is an intermediate representation $t_l$ for each layer $l$, i.e., the output of the $l$-th layer.
The IB principle aims to keep more relevant information in $t_l$ about the target $y$ while decreasing the irrelevant information about the input $x$. 
The information between intermediate representation $t_l$ and input $x$ or label $y$ is quantified by MI, denoted by $I(\cdot)$.
During neural network training, in the ERM phase, the model increases shared information between $t_l$ with respect to both $x$ and $y$. 
Afterward, in the compression phase, the model decreases information contained in $t_l$ about $x$ but preserves (or even increases) information about $y$.
The reduction of $I(x,t_l)$ can be interpreted as a way of reducing noise or compressing irrelevant or redundant features in $x$. At the end of the training, the model strikes a trade-off that maximizes $I(y,t_l)$ and minimizes $I(x,t_l)$.   
Formally, the IB minimizes the following Lagrangian:
\begin{equation}\label{eq:ib_obj}
	\mathcal{L} = I(x,t_l) - \beta I(y,t_l),
\end{equation}
where $\beta$ is a Lagrange multiplier that controls the trade-off between predicting $y$ and compressing $x$.

\section{MIMIR}
\label{sec:method}

\subsection{Threat Model}

\noindent
\textbf{Adversary's goal.} 
The attacker aims to fool the trained model with both non-targeted and targeted attacks. 
The goal is to decrease the overall classification accuracy (non-targeted) or compel the model to recognize any inputs as a specific target (targeted). 
Meanwhile, the adversarial perturbations applied to the input should be invisible so that they will not be easily detected. 
During the training phase, the model optimizes its parameters to minimize the loss between predicted outputs and true labels, thereby enhancing classification accuracy. In contrast, the adversary's objective is to develop algorithms that generate perturbations capable of maximizing this loss.
For a targeted attack, the attacker decreases the loss between the output and the specified target label. 
To maintain the imperceptibility of the perturbations, the magnitude of the adversarial modifications is constrained by distance measures (such as $l2$ and $l_\infty$), ensuring that the alterations to the input data remain within a visually indistinguishable range.

\noindent
\textbf{Adversary knowledge.} 
The attacker has white-box access to the model, including training data, architectures, hyperparameters, and model weights. 
The attacker can implement iterative attacks and unlimited queries to update adversarial examples multiple times in white and black-box settings. 
Adversarial examples can be created according to model architectures, parameters, the gradients of the loss function, and datasets. 
In addition, we also consider adaptive adversaries who are also aware of potential defenses. 
The adversary can design new attacks for a specific model according to the design details of the defense method.

\noindent
\textbf{Defender's goal.}
From the defender's perspective, the main goal is to train a robust model against potential adversarial attacks.
The defender considers the following objectives:
\begin{compactitem}
    \item The defender aims to prevent the performance of natural data from decreasing significantly, but allows a slight drop of natural accuracy for a trade-off in exchange for robustness.
    \item The defense method should provide a notable improvement compared to models without defenses when subjected to various adversarial attacks, especially to adaptive attacks that are aware of the details of the defense method.
    \item The defense method should be efficient and scalable to large datasets such as ImageNet-1K~\cite{5206848}.
\end{compactitem}

\subsection{Design Intuition}
\label{sec:designintuition}
MIM has been established as an effective pre-training paradigm for Vision Transformers (ViTs), demonstrating strong performance across diverse downstream tasks~\cite{vincent2010stacked,He_2022_CVPR,Wei_2022_CVPR,bao2022beit}. 
The core methodology involves masking foreground regions of input images and tasking the model with their reconstruction.
Masking foreground (as opposed to background) regions removes high-information content and results in a harder task (than reconstructing background content), forcing the model to develop stronger feature representations to reconstruct semantically meaningful patterns~\cite{Wang_2023_CVPR}.

Building upon these principles, we introduce an adversarial extension to MIM by incorporating adversarial perturbations into the input space. 
Our formulation is grounded in three interconnected hypotheses:
\begin{compactitem}
    \item Adversarial Robustness through Reconstruction: If a model can reconstruct natural images from adversarially perturbed inputs, its latent representations must inherently discard perturbation-specific information while preserving natural data semantics.
    \item IB Perspective: The encoder-decoder architecture naturally imposes an information bottleneck. When processing adversarial examples $x+\delta$, the system must suppress perturbation-derived information $(\delta)$ while retaining natural data information $(x)$ to achieve accurate reconstruction (see Figure~\ref{fig:method}).
    \item Optimal Masking Strategy: Complete foreground masking (to build a difficult task) is suboptimal, as it eliminates essential reconstruction signals. Instead, our method employs partial masking to maintain a tractable yet challenging learning objective.
\end{compactitem}

\renewcommand{\algorithmicrequire}{\textbf{Input:}}
\renewcommand{\algorithmicensure}{\textbf{Output:}}
\begin{algorithm}[t]
	\caption{MIMIR Pre-training} 
	\label{alg:pretrainingprocess} 
	\begin{algorithmic}[1]
		\REQUIRE{training data $D$, number of epochs $E$, encoder $f_e$, decoder $f_d$, network parameters $\theta$, $\mathcal{L}_{\text{mse}}$}, $\lambda$.
		\ENSURE{optimized weights $\theta$}
		\FOR{$e=0 \to E-1$}
		\STATE $x \leftarrow$ \texttt{sample\_batch($D$)}
            \STATE $\delta \leftarrow $ \texttt{random\_initialization}
            \STATE $x_{re} \leftarrow f_d \circ f_e(x+\delta)$
            \STATE $\delta \leftarrow \mathop{max}\limits_{\delta \in S}\mathcal{L}_{\text{mse}}(x+\delta, x_{re})$
            \STATE Forward:
		\STATE $z \leftarrow f_e (x+\delta) $ 
            \STATE $x_{re} \leftarrow f_d(z)$
            \STATE $loss \leftarrow \mathcal{L}_{\text{mse}} (x, x_{re}) + \lambda I(x+\delta, z)$
            \STATE Backward: 
            \STATE $\theta \leftarrow \theta - \alpha \nabla loss$
		\ENDFOR 
	\end{algorithmic} 
\end{algorithm}

\subsection{Design Details}

\noindent
\textbf{Autoencoder.}
MIMIR consists of an encoder $f_e$ and a decoder $f_d$ aligned with the general design of MAE~\cite{He_2022_CVPR}. 
As with other autoencoders, the encoder extracts discriminative features $z$ from inputs $x$.
The decoder reconstructs the original inputs according to the discriminative features. 
Following the design of ViT~\cite{alexey2021vit}, the input $x$ is separated into non-overlapping image patches.
We randomly mask out a part of the patches and use the remaining patches as inputs for the following process in the encoder.
This random masking process uses uniform distribution to prevent potential sampling bias, such as all foreground information being masked, as it becomes infeasible to find the reconstruction target.
Thus, we aim to keep a part of the foreground as a hint for reconstruction.
The information of masked content is recorded as mask tokens $m$, which are not used by the encoder but reserved for later use by the decoder.
Each token is a learned vector indicating the presence of a masked patch to be predicted.
The mask token is shared by all inputs of the decoder.
Like unmasked patches, mask tokens are also assigned corresponding positional embeddings to be in the correct location in the reconstructed image.
We emphasize that mask tokens are not used in the encoder part.

To train a ViT, we use the same transformer blocks as ViT to build the encoder. 
The encoder only processes the visible patches, making training much more efficient.
When converting to other architectures, such as ConViT~\cite{pmlr-v139-d-ascoli21a}, we use corresponding transformer blocks to build the encoder.
The decoder accepts the encoded visible image patches and mask tokens as inputs.
The decoder is built using the same transformer blocks as the encoder instead of using ViT~\cite{alexey2021vit} transformer blocks for all.
Then, the decoder is followed by a fully connected layer, which outputs the same number of patches as the original image.

\noindent
\textbf{Adversarial Pre-training Target.}
The training target is to extract discriminative features from visible image patches by the encoder and then reconstruct the invisible patches by the decoder.
Therefore, we need a differentiable measurement to quantify the distance between the original image and the reconstructed results.
Following the original MAE~\cite{He_2022_CVPR}, this distance is measured by Mean Squared Error (MSE).
To create a more difficult reconstruction task, we apply adversarial perturbations $\delta$ to the inputs of the encoder.
Thus, the adversarial perturbations are also masked along with the image upon input into the encoder.
The decoder reconstructs the original natural inputs by using the latent features $z$ extracted from adversarial examples.
The outputs of decoder $x_{re}$ and $x$ are used to calculate the MSE loss, which is further used to optimize the model.
Note that the reconstruction differs from the original MAE; we do not use the encoder inputs as reconstruction targets.
Formally, the pre-training process (described in Algorithm~\ref{alg:pretrainingprocess}) can be written as follows:
\begin{equation}
\label{eq:pretrain}
    \begin{aligned}
        z = f_e(x+\delta), ~
         x_{re}= f_d(z), \\
        loss_{\text{mse}} = \mathcal{L}_{\text{mse}} (x, x_{re}).
    \end{aligned}
\end{equation}

\noindent
\textbf{MI as Penalty.}
Inspired by IB, we show in Section~\ref{sec:theomotiv} that MI between latent representation and adversarial examples decreases as the accuracy on adversarial examples, i.e., $I(x+\delta, z)$, decreases while training.
Motivated by this finding, we directly use $I(x+\delta, z)$ as a penalty in our final loss function:
\begin{equation}
\label{eq:mi_loss}
    \begin{aligned}
        loss_{\text{mi}} &= \mathcal{L}_{\text{mse}} (x, x_{re}) + \lambda I(x+\delta, z),
    \end{aligned}
\end{equation}
where $\lambda$ is a regularizer for the MI penalty.
We use $I(x+\delta, z)$ instead of $I(x, z)$ as a penalty.
This is because $x \to x+\delta \to z$ follows the Markov chain since $z$ is extracted from $x+\delta$.
According to Data Processing Inequality (DPI)~\cite{beaudry2011intuitive}, $I(x, z) \leq I(x+\delta, z)$.
$I(x+\delta, z)$ is closer to $z$ on the Markov chain.

\noindent
\textbf{Generating Adversarial Examples.}
To conduct the adversarial pre-training, we need an attack that finds proper adversarial perturbations $\delta$.
As the autoencoder does not provide classification outputs, it is not possible to directly use existing adversarial attacks, such as PGD~\cite{DBLP:conf/iclr/MadryMSTV18}.
Nevertheless, it is feasible to design a new algorithm to find $\delta$ by maximizing $loss_{\text{mse}}$ in Eq.~\eqref{eq:pretrain}.
As the feature $z$ is extracted from only visible image patches, we only attack the visible patches.
We do not add any perturbations to mask tokens since the outputs of the autoencoder are only impacted by visible patches.
Then, the adversarial pre-training learning objective can be written as:
\begin{equation}
\label{eq:advobjective}
	\begin{split}
	 \mathcal{L}_{\text{adv}} = \mathop{\max}\limits_{\delta \in S} & \mathcal{L}_{\text{mse}}(f_d \circ f_e(x+\delta), x), \\
	\mathop{\min}\limits_{\theta} \mathcal{L}_{\text{adv}} & + \lambda I(x+\delta, z),
	\end{split}
\end{equation}
where $\theta$ are the autoencoder parameters.
After the autoencoder is trained, we discard the decoder and initialize a classification layer for the encoder to build a complete model.
Finally, the complete model is fine-tuned by AT methods.

\begin{figure}[t]
\centering
\includegraphics[width = 0.75\linewidth]{./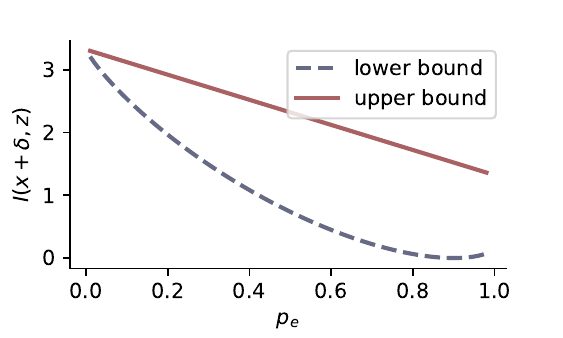}
\caption{The example plots for the lower and upper bounds on the MI in Propositions~\ref{prop:one} and~\ref{prop:two}. The entropy ($H(\cdot)$) is chosen uniformly at random from a set of 10 classes. The lower bound reaches its minimum at $p_e=0.9$.}
\label{fig:bounds}
\end{figure}

\subsection{Theoretical Justification}
\label{sec:theomotiv}
Next, we provide theoretical justification showing that MI between the adversarial example and its latent representation, i.e., $I(x+\delta,z)$, should be constrained. Let $F$ denote any classifier trained on natural samples with desirable prediction accuracy, which may suffer from adversarial attacks. We begin our analysis by first presenting Lemma~\ref{lemma1}.

\begin{lem}\label{lemma1}
Let $F(x+\delta)$ and $F(x_{re})$ denote, respectively, the predicted labels of adversarial sample $x+\delta$ and reconstructed sample $x_{re}$, we have: 
\begin{equation}
    \label{eq:assumption_mi}
    I(F(x+\delta), F(x_{re})) \leq I(F(x+\delta),x_{re}) \leq I(x+\delta, z).
\end{equation}
\end{lem}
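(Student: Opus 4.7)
The plan is to derive both inequalities as applications of the Data Processing Inequality (DPI), which the paper has already invoked earlier. The structural fact that drives everything is the chain of deterministic dependencies produced by the autoencoder pipeline: $z=f_e(x+\delta)$, $x_{re}=f_d(z)$, and $F(x_{re})$ is a fixed (classifier) function of $x_{re}$, while $F(x+\delta)$ is the same fixed function applied to $x+\delta$. So the proof reduces to identifying the right Markov chains and invoking DPI twice.

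First I would handle the right-most inequality $I(F(x+\delta), x_{re}) \leq I(x+\delta, z)$. I would split it into two DPI steps. Since $F(x+\delta)$ is a deterministic function of $x+\delta$, we have the Markov chain $x_{re} \to x+\delta \to F(x+\delta)$, so DPI gives $I(F(x+\delta), x_{re}) \leq I(x+\delta, x_{re})$. Next, because $x_{re}=f_d(z)$ depends on $x+\delta$ only through $z$, the chain $x+\delta \to z \to x_{re}$ holds, so another DPI application yields $I(x+\delta, x_{re}) \leq I(x+\delta, z)$. Chaining these gives the desired bound.

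For the left inequality $I(F(x+\delta), F(x_{re})) \leq I(F(x+\delta), x_{re})$, I would use that $F(x_{re})$ is a deterministic function of $x_{re}$; hence, conditioned on $x_{re}$, the random variable $F(x_{re})$ is a constant and therefore independent of $F(x+\delta)$. This yields the Markov chain $F(x+\delta) \to x_{re} \to F(x_{re})$, and DPI immediately gives the inequality. Concatenating the two results produces the full statement of the lemma.

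I do not foresee any real obstacle: both inequalities are standard DPI consequences once the Markov structure is made explicit. The only point that requires a brief justification is why it is legitimate to regard $z$, $x_{re}$, and the classifier outputs as forming these chains, which comes directly from the fact that the encoder, decoder, and classifier $F$ are deterministic mappings defined on (functions of) the input $x+\delta$. A one-line remark to that effect before applying DPI should suffice.
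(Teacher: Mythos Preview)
Your proposal is correct and follows essentially the same approach as the paper: identify the deterministic dependencies $x+\delta \to F(x+\delta)$ and $z \to x_{re} \to F(x_{re})$ and apply the DPI repeatedly. The only cosmetic difference is that, for the right-most inequality, the paper routes through the intermediate quantity $I(F(x+\delta),z)$ (first processing $x+\delta$ down to $F(x+\delta)$, then $z$ down to $x_{re}$), whereas you route through $I(x+\delta,x_{re})$; both orderings are valid DPI applications and give the same result.
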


\begin{proof}
There are two Markov chains:
\begin{equation}
\label{eq:chain}
    \begin{aligned}
        &x+\delta \rightarrow F(x+\delta), \\
        &z \rightarrow x_{re} \rightarrow F(x_{re}),
    \end{aligned}
\end{equation}
which implies that $F(x+\delta)$ is an indirect observation of $x+\delta$, whereas both $F(x_{re})$ and $x_{re}$ are indirect observations of $z$.

By the data processing inequality (DPI), we have
\begin{equation}
    I(x+\delta, z) \geq I(F(x+\delta), z),
\end{equation}
and
\begin{equation}
    I(F(x+\delta), z) \geq I(F(x+\delta), x_{re}) \geq I(F(x+\delta), F(x_{re})).
\end{equation}
\end{proof}

Now, we define $p_e$ as the probability that the predicted label of $x+\delta$ by $F$ is not equal to that of $x_{re}$, i.e., $p_e= \mathbb{P}( F(x+\delta) \neq F(x_{re}))$. Intuitively, our autoencoder is trained to recover only the natural sample $x$ without any interference from $\delta$. Hence, a relatively large value of $p_e$ is expected. In the following, we establish the connection between $p_e$ and $I(x+\delta, z)$ with both lower and upper bounds, showing that minimizing $I(x+\delta, z)$ also encourages a large value of $p_e$.

\begin{proposition}
\label{prop:one}
Let $H(\cdot)$ denote the information entropy and $H_b(p_e)=-p_e \log_2 p_e - (1-p_e) \log_2 (1-p_e)$ be the binary entropy, we have:
\begin{equation}
    \label{eq:lowerbound}
    H(F(x+\delta)) - H_b (p_e) - p_e \log (|F(x+\delta)| - 1) \leq I(x+\delta, z),
\end{equation}
where $|F(x+\delta)|$ is the total number of categories.\footnote{For instance, for CIFAR-10, $|F(x+\delta)|=10$.}
\end{proposition}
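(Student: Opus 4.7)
The statement is precisely Fano's inequality composed with the DPI chain from Lemma~\ref{lemma1}, so the plan is to recognize this structure and assemble the two pieces. First I would treat $F(x_{re})$ as a ``guess'' of the label $F(x+\delta)$, where the probability of misclassification by this guess is exactly $p_e$ as defined in the text. Fano's inequality then says that the conditional entropy of the true label given the guess is bounded as
\begin{equation*}
H\bigl(F(x+\delta) \mid F(x_{re})\bigr) \leq H_b(p_e) + p_e \log\bigl(|F(x+\delta)|-1\bigr).
\end{equation*}

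Second, I would rewrite this conditional entropy using $H(F(x+\delta) \mid F(x_{re})) = H(F(x+\delta)) - I(F(x+\delta); F(x_{re}))$ and rearrange to obtain the mutual-information lower bound
\begin{equation*}
I\bigl(F(x+\delta); F(x_{re})\bigr) \geq H(F(x+\delta)) - H_b(p_e) - p_e \log\bigl(|F(x+\delta)|-1\bigr).
\end{equation*}

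Third, I would invoke Lemma~\ref{lemma1}, which gives $I(F(x+\delta); F(x_{re})) \leq I(x+\delta, z)$ via the two Markov chains $x+\delta \to F(x+\delta)$ and $z \to x_{re} \to F(x_{re})$ and the DPI. Chaining this with the Fano bound immediately yields the claim. No additional calculation is needed: the number of categories matches because $F$ is a fixed classifier whose output alphabet is the label set of size $|F(x+\delta)|$, and $p_e$ is the error probability of the associated ``estimator.''

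The only real subtlety, rather than an obstacle, is being careful that Fano's inequality is applied with $F(x+\delta)$ playing the role of the quantity being estimated (so that its entropy $H(F(x+\delta))$ appears on the left) and $F(x_{re})$ playing the role of the estimator. This ordering is natural here because the upper-bound direction in Lemma~\ref{lemma1} goes from $F(x+\delta)$ through $x_{re}$ and then to $z$, so the mutual information we wish to lower-bound is $I(F(x+\delta); F(x_{re}))$ rather than the reverse. Once this is pinned down, the proof is essentially a one-line combination of Fano plus Lemma~\ref{lemma1}.
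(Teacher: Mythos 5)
Your proposal is correct and follows essentially the same route as the paper's proof: apply Fano's inequality to bound $H(F(x+\delta)\mid F(x_{re}))$ in terms of $p_e$, use the identity $I(F(x+\delta); F(x_{re})) = H(F(x+\delta)) - H(F(x+\delta)\mid F(x_{re}))$, and then chain with the DPI bound $I(F(x+\delta); F(x_{re})) \leq I(x+\delta, z)$ from Lemma~\ref{lemma1}. Your remark about which variable plays the role of the estimator in Fano's inequality is exactly the point the paper's proof relies on implicitly.
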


\begin{proof}
    By the chain rule of MI, we have 
\begin{equation}
    I(F(x+\delta), F(x_{re})) = H(F(x+\delta)) - H(F(x+\delta) | F(x_{re})).
\end{equation}

By applying Fano's inequality~\cite{fano1949transmission,8683204}, we obtain:
\begin{equation}\label{eq:fano}
    \begin{aligned}
        H(F(x+\delta) | F(x_{re})) \leq H_b(p_e) + p_e \log (|F(x+\delta)| - 1).
    \end{aligned}
\end{equation}

Adding $I(F(x+\delta), F(x_{re}))$ to both sides of Eq.~\eqref{eq:fano}:
\begin{equation}
\label{eq:lowermi_de}
    \begin{aligned}
        H(F(x+\delta)) & - H_b(p_e) - p_e \log (|F(x+\delta)| - 1) \\ 
        & \leq I(F(x+\delta), F(x_{re})) \\
        & \leq I(x+\delta, z).
    \end{aligned}
\end{equation}
The last line of Eq.~\eqref{eq:lowermi_de} is by Lemma~\ref{lemma1}.
\end{proof}

Therefore, we obtain a lower bound of $I(x+\delta,z)$.
If we use CIFAR-10 ($|F(x+\delta)|=10$) and assume the predicted labels $F(x+\delta)$ follow a uniform distribution, we can visualize the lower bound as a function of $p_e$ as shown in Figure~\ref{fig:bounds}, from which we observe an obvious monotonic inverse relationship between $I(x+\delta)$ and $p_e$ in the range $p_e \in [0, 0.9]$.
In fact, we can also obtain an upper bound under the assumption that $I(F(x+\delta),x_{re}) \approx I(x+\delta, z)$, i.e., there is no information loss in the two Markov chains in Eq.~\eqref{eq:chain}.

\noindent
\begin{proposition}
\label{prop:two}
If $I(F(x+\delta),x_{re}) \approx I(x+\delta, z)$, we have:
    \begin{equation}
\label{eq:milowerbound}
\begin{aligned}
    I(x+\delta, z) \lesssim H(F(x+\delta)) - 2p_e,
\end{aligned}
\end{equation}
\end{proposition}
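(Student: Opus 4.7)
The plan is to combine the stated assumption $I(F(x+\delta), x_{re}) \approx I(x+\delta, z)$ with a standard entropy decomposition of $I(F(x+\delta), F(x_{re}))$ and a Fano-type lower bound on the associated conditional entropy, exploiting the binary-entropy inequality $H_b(p) \geq 2p$ for $p \in [0, 1/2]$. Concretely, I would first invoke the Markov chain $z \rightarrow x_{re} \rightarrow F(x_{re})$ from Eq.~\eqref{eq:chain} and apply the data processing inequality to get $I(F(x+\delta), x_{re}) \geq I(F(x+\delta), F(x_{re}))$. Under the working hypothesis that the classifier $F$ is a near-sufficient summary of $x_{re}$ so that this DPI step is approximately tight, this can be chained with the given assumption to yield
\[
    I(x+\delta, z) \;\approx\; I(F(x+\delta), F(x_{re})) \;=\; H(F(x_{re})) - H(F(x_{re}) \mid F(x+\delta)).
\]

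It then remains to lower bound $H(F(x_{re}) \mid F(x+\delta))$ by $2 p_e$. Introducing the disagreement indicator $E = \mathbf{1}\{F(x+\delta) \neq F(x_{re})\}$, a Fano-style decomposition gives
\[
    H(F(x_{re}) \mid F(x+\delta)) = H(E \mid F(x+\delta)) + p_e\, H(F(x_{re}) \mid F(x+\delta), E=1) \gtrsim H_b(p_e),
\]
where the final step uses $H(E \mid F(x+\delta)) \approx H(E) = H_b(p_e)$ under approximate independence of the error indicator and the predicted class. Since $H_b$ is concave with $H_b(0) = 0$ and $H_b(1/2) = 1$ in bits, the chord-below-a-concave-function inequality gives $H_b(p_e) \geq 2 p_e$ for $p_e \in [0, 1/2]$; substituting back produces the claimed upper bound.

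The main obstacle is justifying $H(F(x_{re}) \mid F(x+\delta)) \geq H_b(p_e)$ beyond the symmetric case, since for sufficiently asymmetric conditional distributions this inequality can fail (a small binary example in which one class is always predicted correctly already pushes the conditional entropy below $H_b(p_e)$). The argument therefore implicitly relies on approximate independence between the error event and the predicted class, which together with the near-tightness of the DPI step accounts for the $\lesssim$ in the statement. A fully rigorous version would either restrict attention to this near-symmetric regime or absorb the slack into an explicit additive error term that vanishes as $F$ becomes a sufficient statistic for $x_{re}$.
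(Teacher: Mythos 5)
Your route does not match the paper's, and it contains a genuine gap. The paper's proof is a short application of the Hellman--Raviv inequality in a direction you never use: $2p_e \le H(F(x+\delta)\mid x_{re})$, i.e., the disagreement probability is bounded by half the conditional entropy of the prediction given the \emph{observation} $x_{re}$; then $H(F(x+\delta)\mid x_{re}) = H(F(x+\delta)) - I(F(x+\delta),x_{re}) \approx H(F(x+\delta)) - I(x+\delta,z)$ by the stated hypothesis, and rearranging gives the claim. Your argument instead descends one level further to $F(x_{re})$ and conditions the other way round, on the predicted label $F(x+\delta)$, and that is where it breaks.

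Concretely: (i) to pass from $I(x+\delta,z)$ to $I(F(x+\delta),F(x_{re}))$ with the inequality pointing the right way you must assume the DPI step $I(F(x+\delta),x_{re}) \ge I(F(x+\delta),F(x_{re}))$ is approximately tight, an assumption not contained in the proposition's hypothesis. (ii) The pivotal step $H(F(x_{re})\mid F(x+\delta)) \gtrsim H_b(p_e)$ is not a theorem: with $E$ your error indicator, the Fano decomposition gives $H(F(x_{re})\mid F(x+\delta)) \ge H(E\mid F(x+\delta))$, and since conditioning only decreases entropy this is $\le H_b(p_e)$, not $\ge$; you are in effect trying to reverse Fano, and, as your own binary counterexample shows, the reversal fails outside the near-symmetric regime. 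Hellman--Raviv is precisely the tool that provides a valid lower bound on a conditional entropy in terms of an error probability, but it applies to $H(F(x+\delta)\mid x_{re})$, not to $H(F(x_{re})\mid F(x+\delta))$. (iii) Even granting your independence approximation, $H_b(p_e)\ge 2p_e$ holds only for $p_e\le 1/2$, whereas the proposition is invoked exactly in the regime where $p_e$ is large (the autoencoder is trained so that $F(x_{re})$ disagrees with $F(x+\delta)$; the paper's discussion around its bounds figure goes up to $p_e=0.9$, where $H_b(p_e)\approx 0.47 < 1.8 = 2p_e$), so your final chord inequality points the wrong way precisely where the bound is meant to be informative. One side observation: your decomposition does yield the entropy $H(F(x_{re}))$ as literally written in the statement, whereas the paper's own derivation produces $H(F(x+\delta))$; that mismatch in the paper is worth flagging, but it does not repair the two invalid steps above.
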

in which the notation ``$\lesssim$" refers to less than or similar to.

\begin{proof}
    By the Hellman-Raviv inequality~\cite{1054466,brown2009information}, we have:
    \begin{equation}
    \begin{aligned}
        2p_e & \leq H(F(x+\delta)|x_{re}) \\
         & = H(F(x+\delta)) - I(F(x+\delta),x_{re}) \\
         & \approx H(F(x+\delta)) - I(x+\delta, z).
    \end{aligned}
    \end{equation}
\end{proof}

Similar to the lower bound, the upper bound also indicates $I(x+\delta, z)$ is inversely proportional to $p_e$ as shown in Figure~\ref{fig:bounds}. 
In fact, apart from the above-mentioned lower and upper bounds, there exists an alternative and intuitive way to understand the mechanism of minimizing $I(x+\delta,z)$. For simplicity, let us assume the natural data $x$ and adversarial perturbations $\delta$ are independent\footnote{This assumption is mild for certain scenarios, such as when considering universal or image-agnostic perturbations~\cite{Moosavi-Dezfooli_2017_CVPR}.}, then:
\begin{equation}
    I(x+\delta, z) = I(x,z) + I(\delta, z).
\end{equation}

According to~\cite{vincent2008extracting}, minimizing the expected reconstruction error between natural sample $x$ and corrupted input $x+\delta$ amounts to maximizing a lower bound of the mutual information $I(x,z)$, even though $z$ is a function of the corrupted input. Therefore, by minimizing $I(x+\delta,z)$, the network is forced to minimize $I(\delta,z)$ (since $I(x,z)$ is maximized).
In other words, only the adversarial information about $\delta$ has been removed from $z$ when minimizing $I(x+\delta,z)$. 
This also explains the robustness of $z$.

\section{Experimental Evaluation}
\label{sec:evaluation}

\subsection{Experimental Setup}
We evaluate MIMIR on three datasets: ImageNet-1K~\cite{5206848}, Tiny-ImageNet~\cite{le2015tiny}, and CIFAR-10~\cite{krizhevsky2009learning}, with diverse and commonly used architectures with multiple scales: ViT~\cite{alexey2021vit}, ConViT~\cite{pmlr-v139-d-ascoli21a}, Swin Transformer~\cite{Liu2021swin}, and ConvNext~\cite{liu2022convnext}.
Details of datasets are provided in Appendix~\ref{sec:datasets}.
Hyperparameters of the decoder are included in Appendix~\ref{sec:decoder_hyp}.

\noindent
\textbf{Training Setup.}
We train models from scratch for all experiments.
Following MAE~\cite{He_2022_CVPR}, we do pre-training by MIMIR for 800 epochs.
Please note that we also compare our MIMIR + fine-tuning paradigm with the End2End paradigm.
The End2End paradigm refers to the supervised training of a model from scratch without self-supervised pre-training.
To compare between End2End and pre-training + fine-tuning, the common training schedule practice is pre-training 800 epochs + fine-tuning 100 epochs versus End2End training 300 epochs in the existing works~\cite{He_2022_CVPR,Xie_2022_CVPR,Liu_2023_CVPR}.

The number of warmup epochs is 40 for pre-training.
We use AdamW~\cite{loshchilov2018decoupled} as the optimizer for both pre-training and fine-tuning.
We apply the cosine decay as the learning rate scheduler. 
At pre-training, MIMIR uses the 1-step PGD to generate adversarial examples for all three datasets.
The perturbation budget is $\epsilon=8, \alpha=10$.
For the fine-tuning stage, we use the 10-step PGD AT with perturbation bound $\epsilon=8, \alpha=2$ for CIFAR-10 and Tiny-ImageNet.
For ImageNet-1K, the perturbation bound is $\epsilon=4$.
We use different steps of PGD or APGD to generate the adversarial perturbation for training.
Details on training hyperparameters are provided in Appendix~\ref{sec:appendices_hyperparameters}.

\begin{table}
    \centering
    \setlength\tabcolsep{4pt}
    \caption{Comparison between End2End AT and Pre-training (MIMIR) + Fine-tuning using ViT-S on CIFAR-10.}
    \label{tab:cifar_at}
    \begin{tabular}{llcccc}
        \toprule
          Training & Method & Natural & PGD & AutoAttack \\
         \midrule
         \multirow{6}{*}{End2End} & AT~\cite{DBLP:conf/iclr/MadryMSTV18} & 75.36 & 32.84 & 26.17 \\
          & Fast AT~\cite{Wong2020Fast} & 76.81 & 32.57 & 21.41\\
          & TRADES~\cite{zhang2019trades} & 74.96 & 32.12 & 24.90\\
          & MART~\cite{wang2019improving} & 72.42 & 24.47 & 23.45\\
          & Generalist~\cite{Wang2023Generalist} & 60.88 & 14.44 & 11.20\\
          & DBAT~\cite{levi2024dbat} & 68.32 & 18.83 & 5.25\\
          \midrule
          \multirow{6}{*}{\tabincell{c}{MIMIR} } & AT~\cite{DBLP:conf/iclr/MadryMSTV18} & 86.56\up{11.20} & 56.76\up{23.92} & 45.39\up{19.22}\\
          & Fast AT~\cite{Wong2020Fast} & 87.22\up{10.41} & 49.17\up{16.6} & 35.89\up{14.48} \\
          & TRADES~\cite{zhang2019trades}  & 88.19\up{13.23} & 56.42\up{24.3} & 51.70\up{26.8}\\
          & MART~\cite{wang2019improving}  & 80.55\up{8.13} & 50.81\up{26.34} & 39.92\up{16.47}\\
          & Generalist~\cite{Wang2023Generalist} & 88.81\up{27.93} & 37.85\up{23.41} & 33.67\up{22.47}\\
          & DBAT~\cite{levi2024dbat} & 88.56\up{20.24} & 41.08\up{22.25} & 24.59\up{19.34}\\
          \bottomrule
    \end{tabular}
\end{table}

\begin{table}[b]
\centering
\caption{Standard deviation of 3 runs on 3 datasets.}
\label{tab:stdandmean}
\begin{tabular}{ccccc}
\toprule
Performance & CIFAR-10 & Tiny-ImageNet & ImageNet-1K \\
\midrule
Natural & 86.89$\pm$0.04 & 63.61$\pm$0.20 & 71.58$\pm$0.14\\
PGD & 56.19$\pm$0.33 & 26.44$\pm$0.05 & 42.44$\pm$0.08\\
\bottomrule
\end{tabular}
\end{table}

\begin{table*}
\centering
\caption{Comparison with SOTA results on ImageNet-1K under $\epsilon=4/255$. The ``Adv. Steps'' refers to attack steps for generating adversarial examples for AT. The results of~\cite{wang2024revisiting} are evaluated using 20-step PGD (AutoAttack is designed as a more powerful alternative to PGD), which is marked as $\dag$ in the table. Although AdvXL only uses 20 epochs, it takes more time due to huge datasets for pre-training and fine-tuning.}
\label{tab:allonimagenet}
\begin{tabular}{cccccccccccccc}
\toprule
Architecture & Params (M) & FT & FT Epoch & Adv. Steps & Source & Natural & AutoAttack\\
\midrule
DeiT-S & 22.1 & PGD & 100 & 1 & Augmentation warm-up~\cite{NEURIPS2021_e19347e1} & 66.62 & 36.56\\
DeiT-S & 22.1 & PGD & 110 & 1 & Light Recipe~\cite{10136149} & 66.80 & 37.90\\
ViT-S & 22.1 & PGD & 120 & 3 & EasyRobust~\cite{mao2022easyrobust} & 66.43 & 39.20\\
ViT-S & 22.1 & PGD & 300 & 3 & Adversarially Trained~\cite{liu2024comprehensive} & 70.7 & 43.7\\
RobArch-S & 26.1 & PGD & 110 & 3 & RobArch~\cite{peng2023robarch} & 70.17 & 44.14\\
ViT-S & 22.1 & APGD & 300 & 2 & Pre-training+AT~\cite{NEURIPS2023_2d3b0076} & 69.22 & 44.04\\
\rowcolor{bblue}
ViT-S & 22.1 & PGD & 300 & 3 & MIMIR & \textbf{71.52} & 45.90\\
\rowcolor{bblue}
ViT-S & 22.1 & APGD & 300 & 2 & MIMIR & 71.00 & 46.10\\
\rowcolor{bblue}
ViT-S & 22.1 & APGD & 300 & 3 & MIMIR & 70.96 & \textbf{46.16}\\
\midrule
ViT-B & 86.6 & ARD+PRM & 10 & 5 & ARD+PRM~\cite{NEURIPS2022_760b5def} & 69.10 &  34.62 \\
Swin-B & 87.7 & ARD+PRM & 10 & 5 & ARD+PRM~\cite{NEURIPS2022_760b5def} & 74.36 &  38.61 \\
ViT-B & 86.6 & PGD & 120 & 3 & EasyRobust~\cite{mao2022easyrobust} & 70.64 & 43.04\\
Swin-B & 87.7 & PGD & 120 & 3 & EasyRobust~\cite{mao2022easyrobust} & 75.05 & 47.42\\
RobArch-L & 104 & PGD & 100 & 3 & RobArch~\cite{peng2023robarch} & 73.44 & 48.94\\
ViT-B & 86.6 & PGD & 300 & 3 & Adversarially Trained~\cite{liu2024comprehensive} & 74.7 & 49.7\\
ViT-B & 86.6 & PGD & 20 & 3 & AdvXL~\cite{wang2024revisiting} & 73.4 & 53.0$\dag$ \\
ViT-B & 86.6 & APGD & 300 & 2 & Pre-training+AT~\cite{NEURIPS2023_2d3b0076} & 74.10 & 50.30\\ \rowcolor{bblue}
ViT-B & 86.6 & APGD & 100 & 2 & MIMIR & 74.40 & 51.92 \\
\rowcolor{bblue}
ViT-B & 86.6 & PGD & 100 & 3 & MIMIR & 75.68 & 52.96 \\
\rowcolor{bblue}
ViT-B & 86.6 & PGD & 300 & 3 & MIMIR & \textbf{76.98} & 53.84 \\
\rowcolor{bblue}
ViT-B & 86.6 & APGD & 300 & 2 & MIMIR & 76.32 & \textbf{54.28}\\
\bottomrule
\end{tabular}
\end{table*}

\noindent
\textbf{Evaluation Metrics.}
We use \textbf{natural accuracy} and \textbf{robust accuracy} as evaluation metrics.
Natural accuracy refers to the accuracy of natural and unmodified inputs.
The robust accuracy measures the accuracy under the AutoAttack (AA)~\cite{pmlr-v119-croce20b}.
AutoAttack is an ensemble of diverse parameter-free attacks, including white-box and black-box attacks.
In our experiments, we use the standard version of AutoAttack that contains four attacks, including APGD-ce~\cite{pmlr-v119-croce20b}, APGD-t~\cite{pmlr-v119-croce20b}, FAB-t~\cite{pmlr-v119-croce20a}, and Square~\cite{10.1007/978-3-030-58592-1_29}.
The perturbation budgets for evaluation are $\epsilon=8$ for CIFAR-10 and Tiny-ImageNet, $\epsilon=4$ for ImageNet-1K.
In addition, we also evaluate the MIMIR-trained models with unforeseen attacks, such as CW attacks, attacks with $l_2$ norm, and out-of-distribution data (ImageNet-Corruption~\cite{hendrycks2019robustness}).

\noindent
\textbf{Training stability.}
We also show that the natural accuracy and robustness of MIMIR are stable evaluation metrics.
Due to the high computation cost, we cannot report the standard deviation for all experiments.
To show that our method MIMIR has low variances, we train ViT-S on three datasets three times (1-step PGD AT for ImageNet-1K) and report the standard deviation and average performance in Table~\ref{tab:stdandmean}.

\subsection{Main Results}
\label{sec:ex_results}

We first explore different AT methods and MIMIR for ViT on CIFAR-10, demonstrating the fundamental incompatibility between conventional AT approaches and ViT architectures. 
Following this baseline evaluation, we scale our investigation to the more challenging ImageNet-1K dataset, demonstrating the generalizability and scalability of our proposed MIMIR framework. 
The subsequent sections present comprehensive experimental results across three benchmark datasets: CIFAR-10, Tiny-ImageNet, and ImageNet-1K. 
This multi-scale evaluation strategy allows a thorough analysis of MIMIR's effectiveness under varying conditions, from smaller to large-scale visual recognition tasks.

In addition, we also explore the effectiveness of elucidating diffusion model (EDM) data on AT in Appendix~\ref{sec:edm}.
This generated data is commonly used in AT to improve robustness~\cite{pmlr-v202-wang23ad,peng2023robust,rebuffi2021data,NEURIPS2021_21ca6d0c}.
Specifically, we use 5 million generated CIFAR-10 data and 1 million Tiny-ImageNet data provided by~\cite{pmlr-v202-wang23ad}.
Table~\ref{tab:perf_cifar10_edm} in Appendix~\ref{sec:edm} shows that EDM data significantly improves the robustness.

\noindent \textbf{CIFAR-10.}
Table~\ref{tab:cifar_at} shows the performance of End2End adversarial training from scratch and Pre-training (MIMIR) + Fine-tuning on ViT-S trained on CIFAR-10.
We provide the performance of 6 established or SOTA AT methods on CIFAR-10, indicating that traditional AT training strategies are not applicable to ViTs. 
Importantly, our experimental results also demonstrate that MIMIR can substantially improve all AT methods. 
The reason is that training ViTs from scratch is known to be difficult~\cite{alexey2021vit,zhu2023understanding} and even more difficult for adversarial training~\cite{NEURIPS2022_760b5def}.
For example, robust accuracy is lower than 30\% on ViT-B without pre-training~\cite{NEURIPS2022_760b5def}.
In contrast, MIMIR provides a more straightforward methodology and avoids this difficulty by switching to pre-training with a theoretically grounded MIM learning task.

\begin{table}[t]
\centering
\setlength\tabcolsep{4pt}
\caption{Time consumption of End2End vs. Pre-training+Fine-tuning.}
\label{tab:time_end2endvsptft}
\begin{tabular}{ccccc}
\toprule
Arch & GPU & AT Method & Epochs & Hours \\
\midrule
\multirow{2}{*}{ViT-S} & 4 A6000 & PGD$_{10}$ & 300 & 187.64\\
~ & 4 A6000 & MIMIR(PT)+PGD$_{10}$(FT) & 800(PT)+100(FT) & 123.76\\
\midrule
\multirow{2}{*}{ViT-B} & 4 A6000 & PGD$_{10}$ & 300 & 451.39\\
~ & 4 A6000 & MIMIR(PT)+PGD$_{10}$(FT) & 800(PT)+100(FT) & 263.77\\
\midrule
\multirow{2}{*}{Swin-L} & 4 H100 & PGD$_{3}$ & 300 & 363.42\\
~ & 4 H100 & MIMIR(PT)+PGD$_{3}$(FT) & 800(PT)+100(FT) & 231.14\\
\bottomrule
\end{tabular}
\end{table}

\begin{table*}[tb]
\centering
\caption{The performance of MIMIR on ImageNet-1K against unforeseen threats. The $l_2$ version of the CW~\cite{7958570} attack is limited by $c$. Higher $c$ allows a more powerful perturbation.}
\label{tab:unkonwnattack}
\begin{tabular}{cccccccccccc}
\toprule
Architecture & Method & Natural & CW ($l_2, c=1$) & CW ($l_\infty,\epsilon=4/255$) & AA ($l_2, \epsilon=2.0$) & AA ($l_\infty,\epsilon=4/255$) & ImageNet-C~\cite{hendrycks2019robustness}\\
\midrule
\multirow{2}{*}{ViT-S} & \cite{NEURIPS2023_2d3b0076} & 69.22 & 44.28 & 45.98 & 37.52 & 44.04 & 45.62\\
~ & MIMIR & \textbf{70.96} & \textbf{66.34} & \textbf{49.22} & \textbf{49.12}  & \textbf{46.16}  & \textbf{48.78} \\
\midrule
\multirow{2}{*}{ViT-B} & \cite{NEURIPS2023_2d3b0076} & 74.10 & 59.42 & 52.76 & 52.12 & 50.30 & 53.69\\
~ & MIMIR & \textbf{76.32} & \textbf{65.30} & \textbf{56.82} & \textbf{56.72} & \textbf{54.28} & \textbf{57.07}\\
\bottomrule
\end{tabular}
\end{table*}

\noindent
\textbf{Time Consumption (End2End vs. Pre-training(PT)+Fine-tuning(FT)).}
Note that we follow the standard way to compare End2End and MIMIR (Pre-training + Fine-tuning) training methods by fixing the training schedule, following existing works~\cite{He_2022_CVPR,Xie_2022_CVPR,Liu_2023_CVPR,Woo2023convnextv2,tian2023spark}, where we include pre-training 800 epochs + fine-tuning 100 epochs versus supervised End2End training of 300 epochs.
The reason for having a larger number of pre-training epochs is that self-supervised pre-training is much more efficient (see Table~\ref{tab:efficiency} in Appendix~\ref{sec:efficiency}) than End2End training, and the pre-trained backbone can be used multiple times for various fine-tuning tasks. 
For example, in Table~\ref{tab:cifar_at}, the 6 End2End AT methods cost $300\times6=1800$ fine-tuning epochs. 
MIMIR costs 800 pre-training epochs + $100\times6$ fine-tuning epochs, i.e., we only conduct the pre-training once for results in Table~\ref{tab:cifar_at}.
MIMIR pre-training epoch is more efficient than a fine-tuning epoch by discarding 75\% of image patches.

More specifically, Table~\ref{tab:time_end2endvsptft} shows the total time consumption of End2End vs. Pre-training+Fine-tuning on three architectures with ImageNet-1K.
Although MIMIR takes more training epochs, its pre-training+fine-tuning paradigm still costs less time than End2End adversarial training and gains much better performance on both natural and adversarial examples.
Additional time consumption results with different datasets can be found in Table~\ref{tab:efficiency} in Appendix~\ref{sec:efficiency}.

\noindent  \textbf{ImageNet-1K.} 
Table~\ref{tab:allonimagenet} compares MIMIR with previous works concerning adversarial robustness on ImageNet-1K ($l_\infty, \epsilon=4/255$), which follows the evaluation of common standardized RobustBench~\cite{croce2021robustbench}. 
Similar to other works~\cite{10136149,liu2024comprehensive,NEURIPS2023_2d3b0076}, we consider simpler AT methods (i.e., PGD and APGD AT) instead of the latest AT methods, such as Generalist~\cite{Wang2023Generalist} and DBAT~\cite{levi2024dbat}.
Indeed, since the latest methods introduce tailored components for CNNs to improve their adversarial robustness, they might not be effective for ViTs.
The number of parameters, training epochs, steps in the inner maximization of AT, and clean and robust accuracy are reported to provide a more detailed understanding of the performance. 
The robust accuracy is evaluated by AutoAttack on the RobustBench~\cite{croce2021robustbench} validation set (5,000 images).
We divide the models into: \emph{small} ($\approx 22$M) and \emph{large} ($\approx 86$M) models, corresponding to ViT-S and ViT-B.
Experimental results demonstrate that MIMIR outperforms all previous works across various training setups.

\noindent
\textbf{MIMIR with Various Architectures.}
In Table~\ref{tab:variousarch}, we show that MIMIR can be applied to diverse architectures. 
Specifically, we use three representative options, including ViT+convolutional blocks (CVST)~\cite{NEURIPS2023_2d3b0076}, the latest CNN architecture (ConvNext~\cite{liu2022convnext}), and a hierarchical vision transformer (Swin~\cite{Liu2021swin}).
The ViT+CVST refers to using ConvStem~\cite{NEURIPS2021_ff1418e8} to replace the patch embedding in ViTs with a convolutional block.
The ViT+CVST shows improved robustness compared to pure ViT models according to experiments in~\cite{NEURIPS2023_2d3b0076}.

As CNN and hierarchical architecture cannot accept variable-length inputs, MIMIR is not directly compatible with ConvNext and Swin.
To adapt MIMIR to the hierarchical Swin Transformer, we implemented Masked Image Modeling using Group Window Attention~\cite{huang2022green}, which groups image patches within each local window of arbitrary size and performs masked self-attention in each group.
To apply MIMIR to ConvNext, we use SparK~\cite{tian2023spark} for CNN to handle irregular and randomly masked input images, which is achieved by sparse convolution.
MIMIR achieves better or comparable results compared to SOTA results on RobustBench~\cite{croce2021robustbench}.

\noindent
\textbf{MIMIR against Unforeseen Attacks.}
Except for adversaries who are limited by the adversarial budget, e.g., $l_{\infty} = 8 \text{ or } 4$ or from PGD-family in our main experiments, MIMIR also shows the potential to provide robustness against unforeseen attacks and naturally corrupted data (ImageNet-C~\cite{hendrycks2019robustness}).
Table~\ref{tab:unkonwnattack} demonstrates the robustness of MIMIR against practical unforeseen attacks, including non-PGD attacks ($l_{2}$ and $l_{\infty}$ CW attack~\cite{7958570}), $l_2$ AutoAttack, and ImageNet-C~\cite{hendrycks2019robustness}.
It is clear that MIMIR still performs well against these unforeseen attacks.
In particular, MIMIR achieved top accuracy compared to the results of the ImageNet-C Leaderboard on the RobustBench.

\subsection{Ablation Study and Further Analysis}
\label{sec:ablation}

\noindent
\textbf{Step by step ablation.}
Table~\ref{tab:training_ablation} provides an ablation study to verify the design choices of MIMIR.
The ablation uses 100 epochs of 1-step PGD (PGD$_1$) AT as the baseline.
Then, we apply end-to-end clean ImageNet-1K pre-training (weights available in \texttt{timm} library\footnote{\url{https://github.com/huggingface/pytorch-image-models/blob/main/timm/models/vision_transformer.py}}) as initialization of AT.
After that, we replace the clean pre-training with MAE, adv MAE, and MIMIR step by step.
The adv MAE refers to using adversarial examples but not using the MI $I(x+\delta, z)$ in the loss.
The pre-training schedule is 800 epochs.
We also use stronger adversarial fine-tuning for better performance, including 2-step PGD (PGD$_2$), APGD (APGD$_2$) FT, and a longer fine-tuning scheduler (300 epochs).
Our results indicate that MIMIR outperforms baselines and can be further improved under the long training schedule.

\begin{figure*}[t]
\centering
\includegraphics[width = 1\linewidth]{./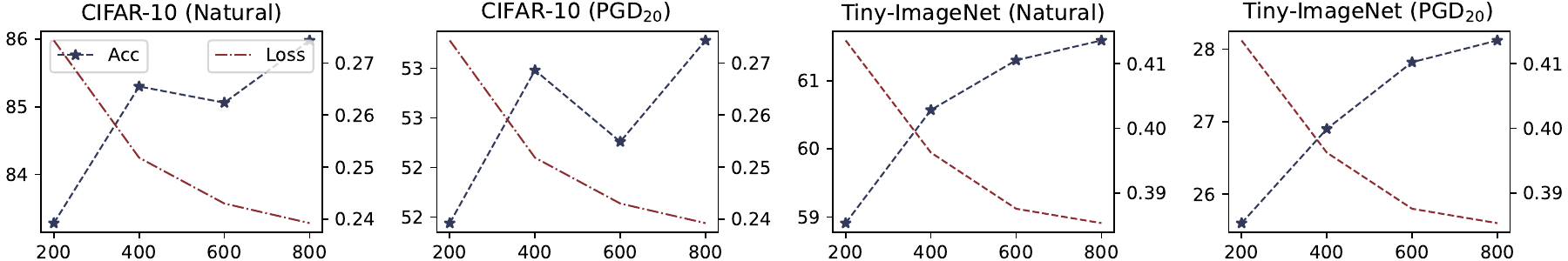}
\caption{Natural, adversarial accuracy, and MIMIR pre-training loss of ViT-S with different numbers of pre-training epochs under a 20-step PGD attack. The performance increases as the number of pre-training epochs increases (loss decreases).}
\label{fig:diff_epoch}
\end{figure*}

\begin{table}[t]
\centering
\caption{Comparsion with SOTA ImageNet-1K results on RobustBench~\cite{croce2021robustbench} with different architectures. 
$\dag$: The CVST modules are also pre-trained with MIMIR.}
\label{tab:variousarch}
\begin{tabular}{ccccccc}
\toprule
Architecture & Method & FT Epoch & Natural & AutoAttack\\
\midrule
\multirow{3}{*}{ViT-S+CVST} & \cite{NEURIPS2023_2d3b0076}
& 300 & 72.56 & 48.08\\
~ & MIMIR  & 300 & 72.72 & \textbf{48.44}\\
~ & MIMIR$\dag$  & 300 & \textbf{73.02} & 48.09\\
\midrule
\multirow{3}{*}{ViT-B+CVST} & \cite{NEURIPS2023_2d3b0076}
& 250 & 76.30 & 54.66\\
~ & MIMIR  & 300 & \textbf{76.72} & 54.04\\
~ & MIMIR$\dag$  & 300 & 76.32 & \textbf{55.08}\\
\midrule
 \multirow{2}{*}{ConvNext-T} & \cite{NEURIPS2023_2d3b0076} & 300 & 72.40 & 48.60 \\
 ~ & MIMIR  & 300 & \textbf{72.50} & \textbf{48.76}\\
\midrule
\multirow{2}{*}{Swin-B} & \cite{liu2024comprehensive}  & 300 & 76.16 & \textbf{56.16}\\
~ & MIMIR  & 150 & \textbf{76.62} & 55.90\\
\midrule
\multirow{2}{*}{Swin-L} & \cite{liu2024comprehensive}  & 300 & \textbf{78.92} & 59.56\\
~ & MIMIR  & 100 & 78.62 & \textbf{59.68}\\
\bottomrule
\end{tabular}
\end{table}

\noindent
\textbf{Longer epochs (lower loss) provide better performance. }
\textit{Pre-training epoch.}
Our experimental framework employs an 800-epoch pre-training as the baseline configuration. To systematically evaluate the impact of training duration, we conduct a comprehensive ablation study using ViT-S architectures, varying the number of pre-training epochs while maintaining a fixed 50-epoch fine-tuning for all models.
As demonstrated in Figure~\ref{fig:diff_epoch}, we observe several key phenomena.
Extended pre-training schedules consistently yield lower MIMIR loss values, and this reduction in loss results in measurable improvements in both adversarial and natural accuracy.
In addition, the improved performance with loss MIMIR loss also indicates that the model capacity is not saturated within the tested epoch range and could be further improved with a larger number of epochs.

\textit{Fine-tuning epoch.}
To isolate and quantify the contribution of MIMIR pre-training to model performance, we employ a short fine-tuning for the pre-trained models. 
This is to train the randomly initialized classification layer since we do not have the classification layer at pre-training.
This approach allows us to evaluate the quality of the representations learned during pre-training.
In Figure~\ref{fig:diff_ft_epoch}, we show that MIMIR pre-training plus 5 or 10 epochs of fine-tuning is enough to achieve similar performance compared to 100-epoch fine-tuning.
These results suggest that the majority of the model's final performance is attributable to the MIMIR pre-training phase.

\begin{table}[t]
\centering
\caption{Ablation of pre-training (PT) and fine-tuning (FT) methods on ImageNet-1K. ($\dag$: catastrophic over-fitting~\cite{Wong2020Fast} due to 1-step AT when fine-tuning, which can be fixed by 2-step AT. The fixed natural and robust accuracy are 69.96 and 36.90, respectively.)}
\label{tab:training_ablation}
\begin{tabular}{clccccc}
\toprule
Architecture & Training Recipe & Natural & AutoAttack\\
\midrule
\multirow{8}{*}{ViT-S} & PGD$_1$ FT w/o PT & 66.02 & 31.40\\
~ &  clean PT + PGD$_1$ FT & 67.04 & 33.70\\
~ & MAE PT + PGD$_1$ FT & 69.98 & 35.64\\
~ & adv MAE PT + PGD$_1$ FT & 68.24 & 19.32$\dag$\\
~ & MIMIR PT  + PGD$_1$ FT & 71.02 & 37.22\\
~ & MIMIR PT  + PGD$_2$ FT & 70.78 & 38.16\\
~ & MIMIR PT  + APGD$_2$ FT & 68.78 & 42.86\\
~ & 100 $\rightarrow$ 300 epochs of FT & 71.00 & 46.10\\
\bottomrule
\end{tabular}
\end{table}

\noindent
\textbf{MI measure.}
In Section~\ref{sec:theomotiv}, we provide lower and upper bound (Eq.~\eqref{eq:lowerbound}) of $I(x+\delta, z)$.
According to the two bounds, $I(x+\delta, z)$ is supposed to decrease while the autoencoder learns to reconstruct the natural image $x$.
This motivates us to directly embed $I(x+\delta, z)$ as a minimizing learning objective.
In this paper, we use $I_{\alpha}$~\cite{9414151} and HSIC~\cite{Ma_Lewis_Kleijn_2020} as estimators (detailed definitions in Appendix~\ref{sec:mi}).
Table~\ref{tab:lambda_hsic_mi} demonstrates the performance with different values of $\lambda$.
According to the results, we use HSIC with $\lambda=1e-05$ for all other experiments.

In addition, Figure~\ref{fig:mi_record} provides the quantities of HSIC values while pre-training with or without using MIMIR. It is clear that MIMIR can help to decrease the mutual information between adversarial perturbation and the learned features, i.e., $I(x+\delta, z)$.

\begin{figure}[t]
\centering
\includegraphics[width = 1\linewidth]{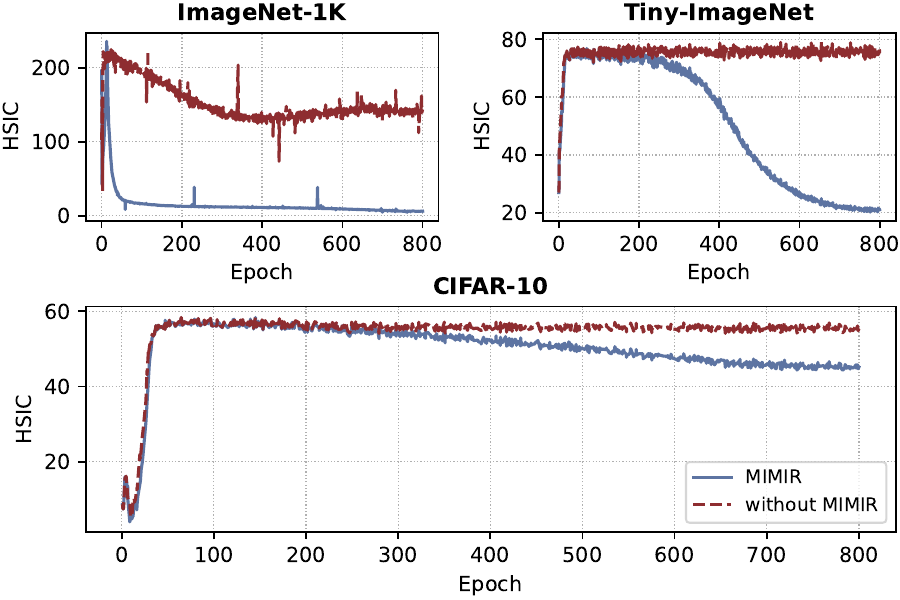}
\caption{The HSIC values (we use HSIC as an alternative to MI) while pre-training ViT-S with and without using MIMIR.}
\label{fig:mi_record}
\end{figure}

\begin{table}[t]
\centering
\caption{Comparison between HSIC~\cite{10.1007/11564089_7} and $I_{\alpha}$~\cite{9414151} using ViT-T on CIFAR-10. Models are pre-trained 800 epochs and adversarially fine-tuned with 10-step PGD for 50 epochs.}
\label{tab:lambda_hsic_mi}
\begin{tabular}{ccccc}
\toprule
Pre-train & $\lambda$ & Estimator & Natural & PGD\\
\midrule
MIMIR & 0.001 & HSIC & 69.63 & 43.17\\
MIMIR & 0.001 & $I_{\alpha}$ & 75.00 & 46.11\\
MIMIR & 1e-05 & HSIC & \textbf{76.30} & \textbf{47.60}\\
MIMIR & 1e-05 & $I_{\alpha}$ & 75.53 & 46.75\\
MIMIR & 1e-06 & HSIC & 74.90 & 46.19\\
MIMIR & 1e-06 & $I_{\alpha}$ & 74.60 & 45.66\\
\bottomrule
\end{tabular}
\end{table}

\noindent
\textbf{1-step is better than the 10-step of AT in pre-training.}
We also show that MIMIR outperforms original MAE~\cite{He_2022_CVPR} and adv MAE with different PGD steps (to generate adversarial examples for training).
MAE in Table~\ref{tab:hsic&mi} refers to using the original MAE for pre-training and then fine-tuning with 10-step PGD.
The adv MAE refers to using adversarial examples without the MI $I(x+\delta, z)$ in the loss.
The adv MAE (10-steps) refers to using the 10-step PGD algorithm ($\epsilon=8, \alpha=2$) to generate adversarial examples at pre-training.
The adv MAE provides higher accuracy than MAE, which supports our statement that using adversarial examples in Masked Image Modeling creates a more difficult reconstruction task.
This more difficult task further improves the performance of downstream models (see also Table~\ref{tab:plainfinetune}).
We use the default learning rate (i.e., $5.0e-4$) of MAE, so there is a performance drop in experiments in Tables~\ref{tab:lambda_hsic_mi} and~\ref{tab:hsic&mi} since AT prefers larger learning rates for CIFAR-10, as shown in Table~\ref{tab:diff_lr} in the appendix.

\begin{table}[t]
\centering
\caption{Comparison between different pre-training settings. 
All models are pre-trained for 800 epochs and then fine-tuned with 10-step PGD for 50 epochs using ViT-T on CIFAR-10.
}
\label{tab:hsic&mi}
\begin{tabular}{lcccc}
\toprule
Pre-train & $\lambda$ & Estimator & Natural & PGD\\
\midrule
MAE & 0.0 & - & 69.02 & 42.31\\
adv MAE (1-step) & 0.0 & - & 74.69 & 46.28\\
adv MAE (10-step) & 0.0 & - & 73.96 & 45.77 \\
MIMIR & 1e-05 & HSIC & \textbf{76.30} & \textbf{47.60}\\
\bottomrule
\end{tabular}
\end{table}

\begin{figure}[t]
\centering
\includegraphics[width = 1.0\linewidth]{./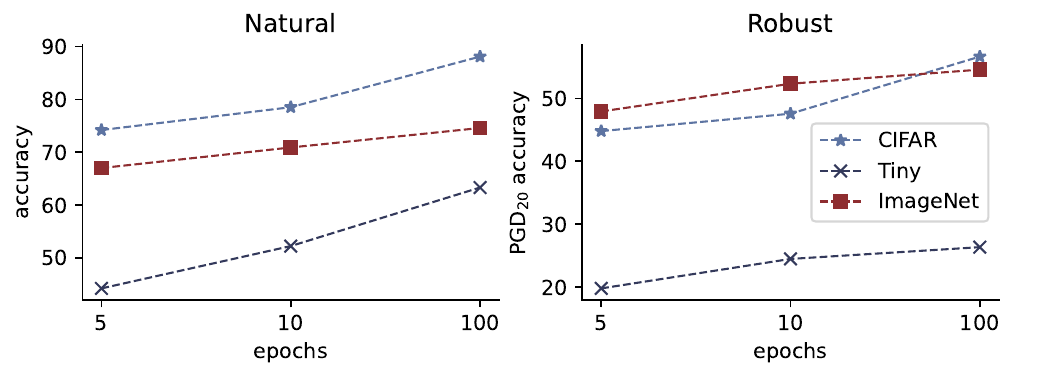}
\caption{Natural and adversarial accuracy of ViT-S adversarially fine-tuned for 5 or 10 epochs on CIFAR-10, Tiny-ImageNet, and ImageNet-1K.}
\label{fig:diff_ft_epoch}
\end{figure}

\noindent
\textbf{Data augmentation is not always harmful.}
Prior research~\cite{NEURIPS2021_e19347e1,10136149} has established that strong data augmentation techniques can adversely affect ViTs during adversarial training, as they may make training samples challenging to learn.
However, we observe that strong data augmentation does not impair model performance when combined with longer fine-tuning periods.

The strong data augmentation refers to the combination of Randaugment~\cite{Cubuk_2020_CVPR_Workshops}, CutMix~\cite{Yun_2019_ICCV}, and MixUp~\cite{DBLP:conf/iclr/ZhangCDL18}.
In this section, we evaluate two different solutions to ease this problem.
First, we only use simple data augmentation for adversarial training, including random crop (or random resize crop for ImageNet-1K) and random horizontal flip (``weak aug'').
Second, we use a 10-epoch warmup procedure for strong data augmentation.
The warmup of Randaugment is implemented by progressively increasing the distortion magnitude from 1 to 9 (``warmup aug'').
For CutMix and MixUp, we warm up by increasing the mixup probability from 0.5 to 1.0.
As shown in Figure~\ref{fig:three_augmentation}, ``weak aug'' provides the best accuracy.
The ``warmup aug'' shows a slightly improved accuracy compared to fusing strong augmentation.
Therefore, we provide a different result from~\cite{NEURIPS2021_e19347e1} on the smaller dataset CIFAR-10, i.e., we show that weak augmentation is better than warmup augmentation.
Even data augmentation with reduced amplitude is still difficult to learn at the beginning of adversarial training.
Although strong augmentation is harmful to a normal training schedule, 
we show in Table~\ref{tab:ablation_augmentation} that CutMix~\cite{Yun_2019_ICCV}, MixUp~\cite{DBLP:conf/iclr/ZhangCDL18}, and Randaugment~\cite{Cubuk_2020_CVPR_Workshops} increase the accuracy of adversarial training when training with a longer schedule, e.g., 800 epochs of fine-tuning. 
We conjecture that combining data and strong augmentations is helpful but difficult for adversarial training to learn.
Thus, more epochs are needed to learn meaningful representation.
Loss and accuracy curves can be found in Appendix~\ref{sec:appendices_ablation_augmentation}.

\begin{table}[t]
\centering
\setlength\tabcolsep{4pt}
\caption{Data augmentation with longer fine-tuning schedule.}
\label{tab:ablation_augmentation}
\begin{tabular}{cclcc}
\toprule
Arch & Epoch & Augmentation & Natural & PGD$_{20}$\\
\midrule
\multirow{3}{*}{ViT-B} & \multirow{3}{*}{800} & Weak Augmentation & 89.90 & 60.26 \\
~ & ~ & + CutMix~\cite{Yun_2019_ICCV},MixUp~\cite{DBLP:conf/iclr/ZhangCDL18} & 91.01 & 60.62\\
~ & ~ & + Randaugment~\cite{Cubuk_2020_CVPR_Workshops} & 90.19 & 62.75\\
\bottomrule
\end{tabular}
\end{table}

\begin{table}[t]
\centering
\setlength\tabcolsep{3.5pt}
\caption{Adversarial accuracy by adaptive attacks. The models are pre-trained for 800 epochs by MIMIR and fine-tuned for 100 epochs by 1-step PGD AT.}
\label{tab:adaptiveeval}
\begin{tabular}{ccccc}
\toprule
Dataset & Model &  PGD$_{20}$ & PGD-MI$_{100}$ & PGD-fea$_{100}$\\
\midrule
\multirow{3}{*}{CIFAR-10} & ConViT-S & 56.35 & 56.16 & 78.52\\
~ & ViT-S & 56.63 & 56.31 & 78.41\\
~ & ViT-B & 58.14 & 57.85 & 80.49\\
\midrule
\multirow{3}{*}{Tiny-ImageNet}  & ConViT-S & 26.39 & 26.29 & 58.50\\
~ & ViT-S & 26.37 & 26.18 & 57.36\\
~ & ViT-B & 25.41 & 25.05 & 58.90\\
\midrule
\multirow{3}{*}{ImageNet-1K}  & ConViT-S & 53.86 & 53.84 & 72.10 \\
~ & ViT-S & 54.56 & 54.55 & 72.27\\
~ & ViT-B & 55.41 & 55.36 & 73.51\\
\bottomrule
\end{tabular}
\end{table}

\subsection{Adaptive Attacks}
We evaluate MIMIR against adaptive adversaries following common practices~\cite{NEURIPS2020_11f38f8e}.
Adaptive adversaries possess the capability to devise targeted attacks specifically tailored to exploit the mechanisms of MIMIR, particularly if they have prior knowledge of its architecture and defensive strategies.
For example, the adversary may attack feature space~\cite{DBLP:journals/corr/SabourCFF15, 10.1145/3323873.3325052} since MIMIR trains the backbone to extract robust features.
Here, the backbone refers to the ViT model without the classification layer, i.e., the encoder of MIMIR.

We provide two adaptive attacks specifically designed against MIMIR.
First, we introduce the PGD Mutual Information attack (PGD-MI), which utilizes the MI $I(x+\delta, z)$ to generate adversarial examples, as $I(x+\delta, z)$ is used in MIMIR pre-training as a penalty in the loss.
PGD-MI attacks the model by directly increasing the MI $I(x+\delta, z)$.
Specifically, we add the MI loss into the PGD algorithm:
\begin{equation}
    \mathop{\max}\limits_{\delta \in S} \mathcal{L}_{CE}(x_i+\delta, y_i) + \lambda I(x+\delta, z).
\end{equation}
where the value of $\lambda$ in MIMIR pre-training is available to adversaries.
Second, we introduce a PGD feature attack (PGD-fea) that directly attacks the feature extracted by ViT backbones following~\cite{DBLP:journals/corr/SabourCFF15}.
In particular, we attack the feature extractor from the backbones after the adversarial fine-tuning.
The PGD-fea attack increases the Euclidean distance between features extracted from natural and adversarial examples.
We implement it using the PGD algorithm:
\begin{equation}
\begin{aligned}
    \mathop{\max}\limits_{\delta \in S}  \mathcal{L}_{\text{mse}}&(f_e(x), f_e(x+\delta)).
\end{aligned}
\end{equation}
Both PGD-MI and PGD-fea are optimized for 100 steps to ensure the attacking algorithm converges.
The perturbation budget is the same as the previous evaluation, i.e., $\epsilon=8/255$ for CIFAR-10 and Tiny-ImageNet, and $\epsilon=4/255$ for ImageNet-1K.

Table~\ref{tab:adaptiveeval} demonstrates the adaptive evaluation results for PGD-MI and PGD-fea attacks.
PGD-MI performs slightly better than the standard PGD attack, which means MI is exploitable information for perturbation crafting, but cannot significantly reduce the robustness.
Furthermore, MIMIR-trained models are converged to a local optimal where the majority of predictions are constantly around the ground truth within the ball function of $\epsilon$.
This also explains the resilience against both PGD and PGD-MI variants.
Regarding PGD-fea, it aims to maximize feature-space divergence rather than the distance of output logits.
However, MIMIR learned robust features that cannot be easily separated, so PGD-fea performs worse than PGD and PGD-MI.
The collective results demonstrate that MIMIR's IB framework induces robust learning dynamics that resist both output-space and feature-space attacks.

\begin{figure}[t]
\centering
\includegraphics[width = 1\linewidth]{./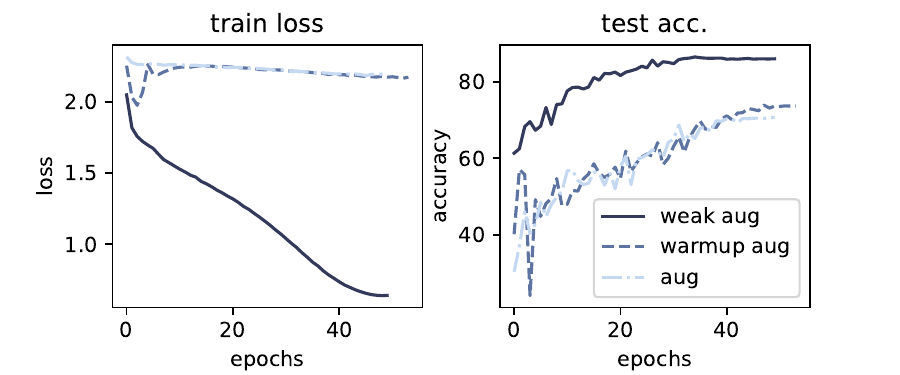}
\caption{Training loss and natural accuracy of ViT-S with three different data augmentations on CIFAR-10.}
\label{fig:three_augmentation}
\end{figure}

\begin{figure*}
\centering
\includegraphics[width = \linewidth]{./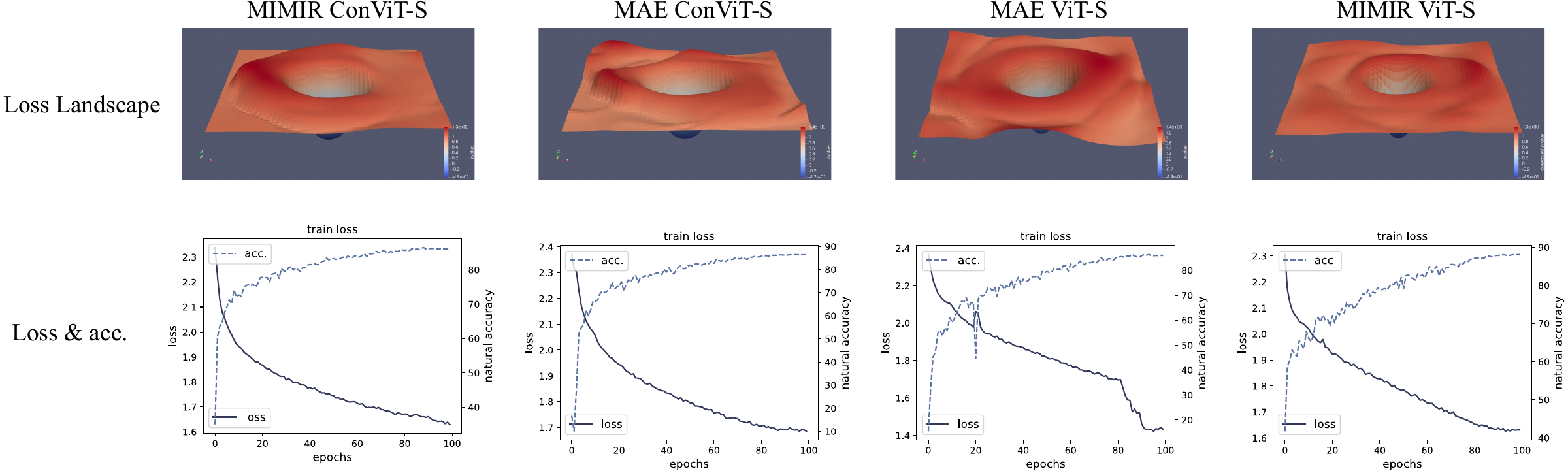}
\caption{The loss landscapes of MIMIR and MAE pre-trained models.}
\label{fig:loss_surface}
\end{figure*}

\begin{table}[t]
\centering
\caption{Natural accuracy of MAE and MIMIR (800 epochs pre-training for both) that are fine-tuned on natural images.}
\label{tab:plainfinetune}
\begin{tabular}{ccccc}
\toprule
Architecture & Pre-train & CIFAR-10 & Tiny-ImageNet & ImageNet-1K\\
\midrule
\multirow{2}{*}{ViT-B} & MAE & 96.79 & 73.38 & 82.92 \\
~ & MIMIR & \textbf{96.91} & \textbf{75.43} & \textbf{83.20}\\
\midrule
\multirow{2}{*}{ConViT-S} & MAE & 94.95 & 69.03 & 78.37 \\
~ & MIMIR & \textbf{95.38} & \textbf{70.40} & \textbf{79.21}\\
\midrule
\multirow{2}{*}{ViT-S} & MAE & 95.95 & 70.00 & 	77.45 \\
~ & MIMIR & 95.95 & \textbf{71.14} & \textbf{78.69}\\
\bottomrule
\end{tabular}
\end{table}

\subsection{Fine-tuning with Natural Images}
In Table~\ref{tab:plainfinetune}, we show the results of MIMIR when fine-tuning with natural images. We compared the performance with MAE~\cite{He_2022_CVPR}.
We fine-tune for 50 epochs for CIFAR-10 and Tiny-ImageNet, and 100 epochs for ImageNet-1K.
The results in Table~\ref{tab:plainfinetune} are reported with 800 pre-training epochs.
The base learning rate used in Table~\ref{tab:plainfinetune} is 0.001. 
The fine-tuning batch size is 512 for CIFAR-10 and Tiny-ImageNet, and 1024 for ImageNet-1K.
We use weak data augmentation (``weak aug''), which includes random crop and random horizontal flip.
Not surprisingly, MIMIR outperforms MAE when fine-tuning with natural data.
This is because MIMIR creates a harder learning task, which is helpful to learn more discriminative representation, as we discussed in Section~\ref{sec:designintuition}, Design Intuition.

According to Tables~\ref{tab:plainfinetune} and~\ref{tab:plainfinetune_robust}, MIMIR consistently shows improved performance on natural data.
Although the models in Table~\ref{tab:plainfinetune_robust} show poor robustness due to fine-tuning on natural data, MIMIR pre-trained ones provide slightly better robustness.
We want to clarify that poor robustness is expected when fine-tuning with natural data.
First, it is known that standard training on natural data learns non-robust features~\cite{NEURIPS2019_e2c420d9}, which hurts performance under adversarial attacks.
Second, MIMIR pre-training is implemented using MSE loss plus an MI penalty between natural inputs and adversarial images.
The adversarial perturbations and MI penalty help MIMIR create a more difficult and discriminative learning task to learn meaningful and robust features.
This process does not include the classification layer of the final model.
Therefore, MIMIR needs a fine-tuning process for superior performance on natural data and adversarial inputs.
In other words, the superior performance of our experiments comes from the combination of MIMIR and the simple fine-tuning process.

\subsection{Visualization of the Loss Landscape}
To show that the robustness of MIMIR-trained models does not stem from gradient masking, 
we plot the loss landscape~\cite{NEURIPS2018_a41b3bb3} in Figure~\ref{fig:loss_surface}.
The loss landscape is the visualization of the loss function as parameters change.
The basic idea is to plot the loss around the optimal parameters.
Formally, we consider in the 2D case,
\begin{equation}
    f_l(\alpha, \beta) = L(\theta^* + \alpha \theta_1 + \beta \theta_2),
\end{equation}
where $\theta_1 $ and $ \theta_2$ are two direction vectors, $\alpha$ and $\beta$ are two arguments of $f_l$.
In practice, we use the parameters of trained models, i.e.,  $\theta^*$. 
The landscapes of all models are smooth, i.e., the gradient at a certain point is clear and can also be easily estimated by local average gradients, which means the gradient is masked.

\begin{table}[t]
\centering
\caption{Natural and adversarial accuracy of ViT-S that is fine-tuned for 5 or 50 epochs with natural images.}
\label{tab:plainfinetune_robust}
\begin{tabular}{ccccccc}
\toprule
Fine-tune & Dataset & Pre-train & Natural & PGD\\
\midrule
\multirow{7}{*}{5 epochs} & \multirow{2}{*}{CIFAR-10} & MIMIR & 93.22 & 0.55\\
~ & ~ & MAE & 93.16 & 0.01\\
\cmidrule{2-5}
~ & \multirow{2}{*}{Tiny-ImageNet} & MIMIR & 63.65 & 0.00\\
~ & ~ & MAE & 62.21 & 0.00\\
\cmidrule{2-5}
~ & \multirow{2}{*}{ImageNet-1K} & MIMIR & 72.45 & 0.20\\
~ & ~ & MAE & 69.47 & 0.02\\
\midrule
\multirow{7}{*}{50 epochs} & \multirow{2}{*}{CIFAR-10} & MIMIR & 95.95 & 0.28\\
~ & ~ & MAE & 95.95 & 0.29\\
\cmidrule{2-5}
~ & \multirow{2}{*}{Tiny-ImageNet} & MIMIR & 71.14 & 0.00\\
~ & ~ & MAE & 70.00 & 0.00\\
\cmidrule{2-5}
~ & \multirow{2}{*}{ImageNet-1K} & MIMIR & 78.69 & 0.18\\
~ & ~ & MAE & 77.45 & 0.05\\
\bottomrule
\end{tabular}
\end{table}

\section{Related Work}
\label{sec:relatedwork}

\subsection{Vision Transformer}
The transformers~\cite{NIPS2017_3f5ee243} were first proposed in natural language processing (NLP).
With the mechanism of global self-attention, transformers can effectively capture the non-local relationships among all text tokens~\cite{DBLP:conf/naacl/DevlinCLT19,NEURIPS2020_1457c0d6,DBLP:conf/acl/DaiYYCLS19}.
A substantial effort is made to apply the transformer and self-attention mechanism in computer vision~\cite{alexey2021vit,Touvron_2021_ICCV,pmlr-v139-d-ascoli21a}.
The pioneering work, ViT~\cite{alexey2021vit}, demonstrated that the pure transformer architecture could achieve competitive performance on various tasks.
ViT also reveals that transformers lack inductive biases~\cite{alexey2021vit}.
For example, locality, two-dimensional neighborhood structure, and translation equivariance are inherent to CNNs but not applicable to ViTs~\cite{alexey2021vit}.
Due to this shortcoming, ViTs usually require large-scale training to get competitive performance, such as pre-training on ImageNet-21K~\cite{5206848} and JFT-300M~\cite{Sun_2017_ICCV}.
To alleviate the ViT need for large datasets, DeiT~\cite{pmlr-v139-touvron21a} introduced a teacher-student strategy to distill knowledge from a teacher CNN for a student ViT. 
In the MIM field, MAE~\cite{He_2022_CVPR} uses a masked autoencoder with a lightweight decoder as a visual representation learner.
Its learning objective is to reconstruct the original image by the decoder while using masked images as input to the autoencoder.
The advantage is that MAE can randomly discard 75\% image patches when pre-training under ImageNet-1K~\cite{5206848}, which means more efficient training.

\subsection{Adversarial Attacks on ViTs}
The concept of adversarial attacks first appeared in~\cite{10.1145/1014052.1014066}, which proposed a formal framework and algorithms against the adversarial spam detection domain.
Then, the adversarial attacks were popularized by Biggio et al.~\cite{10.1007/978-3-642-40994-3_25} and Szegedy et al.~\cite{szegedy2013intriguing} in image classification.
The generation of adversarial examples depends on the model's gradient or estimated gradient in a black box situation~\cite{pmlr-v80-ilyas18a}.
Therefore, adversarial attacks can be easily applied to transformers by using the gradient of attention blocks concerning inputs.
This also raises the question of whether transformers are more robust than CNNs. 
Benz et al.~\cite{DBLP:conf/bmvc/BenzHZKK21} found that CNNs are less robust than ViTs due to their shift-invariant property.
Bhojanapalli et al.~\cite{Bhojanapalli_2021_ICCV} found that ResNet models are more robust than transformers at the same model size under FGSM attack, but under PGD~\cite{DBLP:conf/iclr/MadryMSTV18} attack, transformer models show better robustness.
As ViTs process the input image as a sequence of patches, Gu et al.~\cite{10.1007/978-3-031-19775-8_24} found that ViTs are more robust than CNNs to naturally corrupted patches because the attention mechanism helps ignore naturally corrupted image patches.
The later work~\cite{NEURIPS2021_e19347e1} revealed that CNNs could be as robust as ViTs against adversarial attacks if CNNs are trained with proper hyperparameters.

\subsection{Adversarial Defense}
PGD~\cite{DBLP:conf/iclr/MadryMSTV18} adversarial training is considered one of the most effective defenses for CNNs and can withstand adaptively designed attacks~\cite{pmlr-v80-athalye18a}.
However, PGD AT is harmful to the accuracy of clean data~\cite{DBLP:conf/iclr/MadryMSTV18,pang2022robustness,levi2024dbat}.
Generalist~\cite{Wang2023Generalist} solves this problem by formulating different training strategies for robust and natural generalization separately.
DBAT~\cite{levi2024dbat} solves the decrease in natural accuracy by adding dummy classes~\cite{chen2018dummyclass} to the classification space.

Due to the difference between CNNs and ViTs, there have been some recent efforts to explore new adversarial training approaches for ViTs~\cite{NEURIPS2022_760b5def, 10136149,10.1007/978-3-031-19778-9_18}.
Mo et al.~\cite{NEURIPS2022_760b5def} presented a new adversarial training strategy based on the following observations: 1) pre-training with natural data can provide better robustness after adversarial fine-tuning, 2) gradient clipping is necessary for adversarial training, and 3) using SGD as the optimizer is better than Adam.
Debenedetti et al.~\cite{10136149} also presented an improved training strategy for ViTs by evaluating different combinations of data augmentation policies.
As adversarial training is time-consuming, AGAT~\cite{10.1007/978-3-031-19778-9_18} leverages the attention score while training to discard non-critical image patches after every layer.
Unlike previous works, we provide a different training paradigm by using MIM for adversarial pre-training.
Our method is efficient as we discard 75\% image patches while pre-training.
Our method is effective as we eliminate the information of adversarial perturbations from two information sources of natural and adversarial inputs.
We also provide theoretical proof that the information of adversarial perturbations is eliminated.

\subsection{Self-Supervised Adversarial Pre-Training}
Self-supervised learning~\cite{He_2020_CVPR,Chen_2021_CVPR,He_2022_CVPR,pmlr-v119-chen20j} refers to extracting meaningful representation from unlabeled data, which can be used for downstream recognition tasks.
Self-supervised methods are beneficial for out-of-distribution detection on difficult, near-distribution outliers~\cite{NEURIPS2019_a2b15837}, which leads to using self-supervised training to improve adversarial robustness~\cite{Chen_2020_CVPR,NEURIPS2019_a2b15837,NEURIPS2020_ba7e36c4,NEURIPS2021_b36ed8a0,wu2023denoising,rebuffi2023adversarially}.
The basic idea is to build a min-max learning object similar to traditional adversarial training.
For example, Jiang et al.~\cite{NEURIPS2020_ba7e36c4} considered using two adversarial samples or combining one adversarial sample and one natural sample to learn a consistent representation in contrastive learning.
In more recent work, You et al.~\cite{you2023beyond} proposed NIM De$^3$ to denoise adversarial perturbations.
However, the motivation of these works relies on complex self-supervised pre-training technologies, making it more difficult to understand the inner mechanisms or provide theoretical results.
MIMIR not only provides better performance but also provides intuitive insights with theoretical motivation.

\section{Discussion and Limitations}
\label{sec:discussion}

Following the principle of IB, we can intuitively consider a bottleneck between the encoder and decoder.
As the reconstruction output is constrained by natural data $x$, the bottleneck will filter out information from adversarial perturbations $\delta$.
We provide a theoretical guarantee of this bottleneck.
In Eq.~\eqref{eq:mi_loss}, we embed this bottleneck as a learning object to further improve the performance, which also confirms the correctness of our theoretical guarantee.
With the two information sources of $x$ and $\delta$, the model is trained to learn the robust features from $x$ and forget the information of $\delta$ under the constraint of the reconstruction target.

While MIMIR shows better performance, there are still limitations.
MIMIR is a pre-training method.
Adversarial fine-tuning is necessary to build the final robust model.
Thus, the shortcomings of traditional adversarial training cannot be completely avoided.
In our experiments, we utilize the simple PGD algorithm for fine-tuning, but one can further improve MIMIR pre-trained models with more advanced approaches.
In addition, MIMIR follows the design of MAE, and we also utilize the characteristic that ViTs can process variable-length inputs.
Therefore, the current MIMIR cannot directly handle CNNs.
While it is not trivial, we apply MIMIR to the latest CNN architecture by sparse convolution from SparK~\cite{tian2023spark}.
However, sparse convolution is not as efficient as dropping patch embeddings.
We leave these limitations to future work.

\section{Conclusions}
\label{sec:conclusions}

This paper provides a novel theoretical analysis of AT for ViTs through the lens of IB. 
We found that constraining the MI between adversarial perturbations and their latent representations in ViT-based autoencoders, as governed by derived MI bounds, is critical for enhancing model robustness.
Building upon this theoretical foundation, we propose MIMIR as a theoretically grounded pre-training method to improve adversarial robustness for ViTs.
MIMIR operates by processing adversarial examples as inputs while reconstructing their natural data as targets. 
This approach leverages the inherent information bottleneck in autoencoder architectures to achieve two key objectives: (1) progressively eliminating perturbation-related information while (2) preserving the essential features of the original data distribution. 
Our extensive experimental evaluation demonstrates that MIMIR significantly outperforms existing adversarial training methods across multiple benchmark datasets, achieving SOTA results on ImageNet-1K.
In addition, MIMIR is robust against unforeseen attacks and common corrupted data and can resist adaptive attacks.

\section*{Acknowledgment}
The authors thank the anonymous reviewers for their constructive comments that helped improve this paper and the artifact.
This work used the Dutch national e-infrastructure with the support of the
SURF Cooperative using grant no. EINF-10853. 
This work was partially supported by the Horizon Europe programme under the project SHASAI (No. 101225866).

\bibliographystyle{IEEEtran}
\bibliography{mimir_ref}

\clearpage

\appendix

\subsection{Datasets}
\label{sec:datasets}

We use three commonly used datasets to evaluate MIMIR: CIFAR-10~\cite{krizhevsky2009learning}, Tiny-ImageNet~\cite{le2015tiny}, and ImageNet-1K~\cite{5206848}.
CIFAR-10~\cite{krizhevsky2009learning} comprises 50,000 images with size $3 \times 32 \times 32$ in 10 classes.
ImageNet-1K~\cite{5206848} is the most commonly used dataset for the evaluation of ViTs and their variants, which is composed of more than 1.2 million high-resolution images in 1,000 classes. 
In our experiments, images from ImageNet-1K are resized to $3 \times 224 \times 224$.
For completeness, we also include Tiny-ImageNet~\cite{le2015tiny} as a medium size dataset between CIFAR-10~\cite{krizhevsky2009learning} and ImageNet-1K~\cite{5206848}.
Tiny-ImageNet~\cite{le2015tiny} contains 100,000 images with size $3 \times 64 \times 64$ in 200 classes.

\subsection{Decoder Hyperparameters}
\label{sec:decoder_hyp}

We use transformer blocks but fewer layers as the backbone of the decoder.
For CIFAR-10, we use the patch size of 2, 4 for Tiny-ImageNet, and 16 for ImageNet-1K.
Table~\ref{tab:model_parameter} shows the hyperparameters of decoder architectures.
For different ViT architectures, we use the transformer blocks of the respective architectures to build the encoder.

\begin{table}[t]
\centering
\caption{Model architectures of the encoder and decoder.}
\label{tab:model_parameter}
\begin{tabular}{ccccc}
\toprule
Model & Layers & Hidden size & MLP ratio & Heads \\
\midrule
ViT-T (encoder) & 12 & 192 & 4 & 3\\
ViT-S (encoder) & 12 & 384 & 4 & 6\\
ViT-B (encoder) & 12 & 768 & 4 & 12\\
decoder & 2 & 128 & 4 & 16\\
\bottomrule
\end{tabular}
\end{table}

\begin{table}
\footnotesize
\centering
\caption{Pre-training hyperparameters.}
\label{tab:pretraining_parameter}
\begin{tabular}{ll}
\toprule
Config & Value \\
\midrule
optimizer & AdamW \\
base learning rate & 1.5e-4 \\
weight decay & 0.05 \\
optimizer momentum & $\beta_1=0.9, \beta_2=0.95$\\
batch size & 512(CIFAR-10, Tiny), 2,048 (ImageNet-1K) \\
learning rate schedule & cosine decay \\
warmup epochs & 40 \\
training epochs & 800 \\
augmentation & RandomResizedCrop, RandomHorizontalFlip \\
\bottomrule
\end{tabular}
\end{table}

\subsection{Details of Training Hyperparameters}
\label{sec:appendices_hyperparameters}

In Tables~\ref{tab:pretraining_parameter} and~\ref{tab:finetune_parameter}, we provide the default hyperparameters used in our experiments.
We use different patch sizes for different datasets: patch size 2 for CIFAR-10, 4 for Tiny-ImageNet, and 16 for ImageNet-1K.
Using smaller patch sizes increases the time consumption when calculating self-attention, but MIMIR pre-training discards 75\% patches, making it still efficient.
Due to the depth and comparatively small embedding size of CaiT, we use a different drop path and layer-wise decay when fine-tuning (for ImageNet-1K).
For CaiT-XXS24, we use 0.95 and 0.15 as layer-wise decay and dropout, and 0.85 and 0.35 for CaiT-S36.
We also apply the stochastic depth decay rule~\cite{10.1007/978-3-319-46493-0_39} to CaiT.
CaiT-S36 models are only fine-tuned for 50 epochs due to time consumption, and it is sufficient to get superior results.
The batch size to fine-tune CaiT is 512 due to the limitation of GPU memory.
Other hyperparameters are consistent with Tables~\ref{tab:pretraining_parameter} and~\ref{tab:finetune_parameter}.

\begin{table}
\footnotesize
\centering
\caption{Fine-tuning hyperparameters.}
\label{tab:finetune_parameter}
\begin{tabular}{ll}
\toprule
Config & Value \\
\midrule
optimizer & AdamW \\
base learning rate & 0.5e-2 (CIFAR-10), 1e-3 (ImageNet, Tiny) \\
weight decay & 0.05 \\
optimizer momentum & $\beta_1=0.9, \beta_2=0.999$\\
layer-wise lr decay & 0.65 \\
batch size & 128 (CIFAR-10), 256 (Tiny), 1,024 (ImageNet) \\
learning rate schedule & cosine decay \\
warmup epochs & 10 \\
training epochs & 100 \\
augmentation & RandomResizedCrop, RandomHorizontalFlip \\
augmentation (IN1K) & CutMix, MixUp, Randaugmen\\
drop path & 0.1 \\
\bottomrule
\end{tabular}
\end{table}

\begin{table}[t]
\centering
\caption{Different learning rates. Fine-tuned for 50 epochs.}
\label{tab:diff_lr}
\begin{tabular}{cccccccccccccc}
\toprule
Dataset & Models & LR & Natural & PGD$_{10}$ \\
\midrule
\multirow{5}{*}{CIFAR-10} & \multirow{5}{*}{ViT-T}& 5.0e-4 & 76.30 & 47.60\\
~ & ~ &  1.0e-3 & 80.69 & 49.56\\
~ & ~ & 1.0e-2 & 85.62 & 48.78\\
~ & ~ & 5.0e-2 & 85.12 & 50.30\\
~ & ~ & 1.0e-1 & 84.51 & 50.40\\
\bottomrule
\end{tabular}
\end{table}

\subsection{Comparing MIMIR and MAE Performance with EDM}
\label{sec:edm}
In Table~\ref{tab:perf_cifar10_edm}, we compare the performance of MIMIR and MAE on CIFAR-10 and Tiny-ImageNet. Both methods pre-train for 800 epochs and fine-tune for 100 epochs.
MIMIR consistently outperforms MAE on both natural accuracy and adversarial robustness.
These results support our design intuition that adversarial noise builds a more difficult task for Masked Image Modeling, which helps the ViT encoder learn more discriminative features.

In addition, we use the elucidating diffusion model (EDM) data as data augmentation.
EDM generative data is usually used to improve the performance of adversarial training~\cite{pmlr-v202-wang23ad,peng2023robust,rebuffi2021data,NEURIPS2021_21ca6d0c}.
Specifically, we use 5 million generated CIFAR-10 data and 1 million Tiny-ImageNet data provided by~\cite{pmlr-v202-wang23ad}.
The EDM data is applied to experiments with CIFAR-10 and Tiny-ImageNet but not to ImageNet-1K (EDM data for ImageNet-1K are not provided in~\cite{pmlr-v202-wang23ad}).

\begin{table}
\centering
\caption{Natural and adversarial accuracy on CIFAR-10 and Tiny-Imagenet test set using MIMIR and MAE, pre-training (800 epochs) and then fine-tuning (100 epochs) using PGD adversarial training. We use EDM data from~\cite{pmlr-v202-wang23ad} as data augmentation.}
\label{tab:perf_cifar10_edm}
\begin{tabular}{cccccccccccc}
\toprule
Dataset & Arch & Pre-train & Natural & PGD$_{20}$ & AA\\
\midrule
\multirow{4}{*}{\tabincell{c}{CIFAR-10}} & \multirow{2}{*}{\tabincell{c}{ViT-S}} & MAE & 90.66 & 59.77 & 55.48\\
 ~ & ~ & MIMIR & \textbf{91.94} & \textbf{64.04} & \textbf{61.06}\\
 \cmidrule{2-6}
~ & \multirow{2}{*}{\tabincell{c}{ViT-B}} & MAE & 92.13 & 63.03 & 59.44\\
 ~ & ~ & MIMIR & \textbf{92.42} & \textbf{64.88} & \textbf{62.03}\\
 \midrule
 \multirow{4}{*}{\tabincell{c}{Tiny-ImageNet}} & \multirow{2}{*}{\tabincell{c}{ViT-S}} & MAE & 62.77 & 28.23 & 23.73\\
 ~ & ~ & MIMIR & \textbf{63.83} & \textbf{28.74} & \textbf{24.54}\\
  \cmidrule{2-6}
~ & \multirow{2}{*}{\tabincell{c}{ViT-B}} & MAE & 65.76 & 25.25 & 22.05\\
 ~ & ~ & MIMIR & \textbf{66.75} & \textbf{26.86} & \textbf{23.87}\\
\bottomrule
\end{tabular}
\end{table}

\subsection{Data Augmentation Evaluation}
\label{sec:appendices_ablation_augmentation}

Figure~\ref{fig:mix_longepoch} demonstrates the loss and accuracy while training with different augmentations. 
``no mix'' refers to using only weak augmentation, including RandomResizedCrop and RandomHorizontalFlip.
``+mix'' refers to using MixUp (0.8) and CutMix (1.0).
``+aug'' refers to using MixUp (0.8), CutMix (1.0), and Randaugment (rand-m9-mstd0.5-inc1).

\begin{figure}[ht]
\centering
\includegraphics[width = 1\linewidth]{./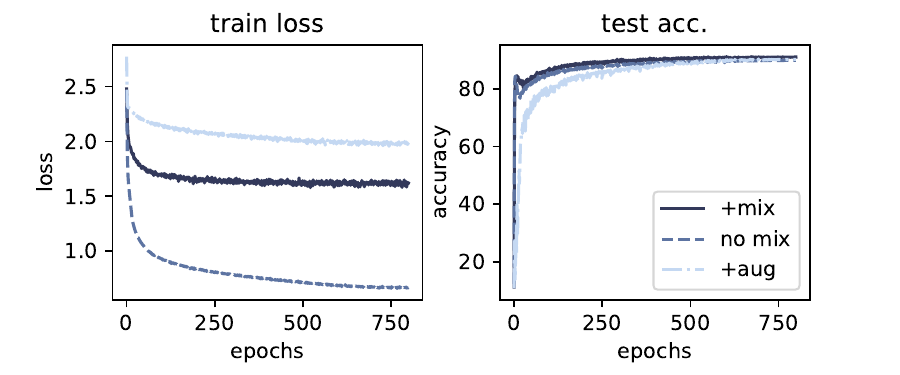}
\caption{The training results of using different data augmentations with 800 epochs.}
\label{fig:mix_longepoch}
\end{figure}

\subsection{Decoder Size}
Table~\ref{tab:decoder_size} provides the performance when pre-training with different decoder sizes. Specifically, the experiments are conducted on different numbers of decoder layers and different hidden sizes.
In Table~\ref{tab:decoder_size}, a deeper decoder may slightly increase the performance, but a larger hidden size may decrease the performance.
Additionally, increasing the size of the decoder will also increase the training cost, so we prefer to use a small decoder.

\begin{table}[]
    \centering
    \caption{The performance and pre-training time-consumption (hours) of ViT-S with different decoder sizes (depth and Hidden size).}
    \label{tab:decoder_size}
    \begin{tabular}{cccccc}
    \toprule
         Dataset & Layers & Hidden & Natural & PGD$_{20}$ & Time\\
         \midrule
         \multirow{4}{*}{\tabincell{c}{CIFAR-10}} & 2 & 256 & 86.45 & 55.12 & 5.28\\
         ~ & 2 & 512 & 85.45 & 51.57 & 6.76\\
         ~ & 4 & 128 & 86.52 & 55.10 & 5.08\\
         ~ & 6 & 128 & 87.37 & 56.93 & 6.01\\
         \midrule
         \multirow{4}{*}{\tabincell{c}{Tiny-ImageNet}} & 2 & 256 & 64.03 & 29.13 & 11.64\\
         ~ & 2 & 512 & 63.11 & 28.65 & 14.02\\
         ~ & 4 & 128 & 63.44 & 28.79 & 11.31\\
         ~ & 6 & 128 & 64.39 & 28.61 & 12.68\\
         \bottomrule
    \end{tabular}
\end{table}

\subsection{Subset of training data}

Table~\ref{tab:sub_train} shows the performance of pre-training on a small subset of training data, which explores whether MIMIR still performs well at a lower cost.
MIMIR performs well when it uses only 10\% of the training data and can achieve near-full data performance using only 25\% of the data.

\begin{table}[]
    \centering
    \caption{The performance of MIMIR pre-training on a subset of training data.}
    \label{tab:sub_train}
    \begin{tabular}{cccc}
    \toprule
        Dataset & Proportion & Natural & PGD$_{20}$\\
        \midrule
        \multirow{3}{*}{\tabincell{c}{CIFAR-10}} & 0.1 & 83.31 & 46.49\\
        ~ & 0.25 & 85.75 & 52.61\\
        ~ & 0.5 & 86.11 & 53.78\\
        \midrule
        \multirow{3}{*}{\tabincell{c}{Tiny-ImageNet}} & 0.1 & 57.92 & 24.61\\
        ~ & 0.25 & 60.85 & 26.46\\
        ~ & 0.5 & 62.44 & 28.13\\
\bottomrule
    \end{tabular}
\end{table}

\subsection{Layer-Wise MI}
Table~\ref{tab:layer_mi} shows the performance of using latent features from different layers to calculate the MI penalty in Equation~\ref{eq:mi_loss}.

We find a phenomenon of  MI oscillation, which occurs in layers closer to the inputs.
This is because the latent features in those layers do not include the additional normalization layer.
In the standard ViT design, an additional normalization layer is included after the final transformer layers to enhance stability and performance. 
In MIMIR, we use the latent features after the final normalization to calculate the MI penalty, i.e., layer 12 in the case of ViT-S. 
The MI oscillation also decreases the performance of fine-tuned models, as shown in Table~\ref{tab:layer_mi}.

\begin{table}[]
    \centering
    \caption{The performance of ViT-S while calculating MI for MIMIR at different layers.}
    \label{tab:layer_mi}
    \begin{tabular}{cccc}
    \toprule
        Dataset & Layer Index & Natural & PGD$_{20}$\\
        \midrule
        \multirow{4}{*}{\tabincell{c}{CIFAR-10}} & 5 & 86.45 & 54.51\\
        ~ & 7 & 85.93 & 53.22\\
        ~ & 9 & 86.39 & 53.41\\
        ~ & 12 & 86.56 & 56.76\\
        \midrule
        \multirow{4}{*}{\tabincell{c}{Tiny-ImageNet}} & 5 & 63.63 & 28.67\\
        ~ & 7 & 53.41 & 24.05\\
        ~ & 9 & 61.79 & 18.49\\
        ~ & 12 & 63.82 & 28.74\\
         \bottomrule
    \end{tabular}
\end{table}

\subsection{Efficiency}
\label{sec:efficiency}

We provide an analysis of the efficiency of MIMIR.
Table~\ref{tab:efficiency} provides the total time consumption and memory usage of different adversarial training methods, which are evaluated on four A6000 GPUs.
MIMIR is more efficient than 10-step PGD but slightly less efficient than FastAT, with higher robust accuracy than both 10-step PGD and FastAT.
Further, we provide the training time of MAE in Table~\ref{tab:efficiency}, which shows that the extra training time consumption introduced by the calculation of MI between $x+\delta$ and $z$ is small.

\begin{table*}
\centering
\setlength\tabcolsep{9pt}
\caption{The average time consumption on 4 GPUs. The ``mem.'' refers to GPU memory usage. The total time is estimated based on the time consumption on a single epoch. The training schedule for PGD$_{10}$ and FastAT is 300 epochs.
The training schedule for MAE and MIMIR is 800 epochs.}
\label{tab:efficiency}
\begin{tabular}{cccccccccccccc}
\toprule
\multicolumn{3}{c}{~} & \multicolumn{2}{c}{\textbf{CIFAR-10}~\cite{krizhevsky2009learning}} & \multicolumn{2}{c}{\textbf{Tiny-ImageNet}~\cite{le2015tiny}} & \multicolumn{2}{c}{\textbf{ImageNet-1K}~\cite{5206848}}\\
Architecture & \#Parames (M) & Method & time[H] & mem.[GB] & time[H] & mem.[GB] & time[H] & mem.[GB] \\
\midrule
\multirow{4}{*}{ViT-S} & \multirow{4}{*}{21.34} & PGD$_{10}$ AT & 12.44 & 2.54$\times$4 & 25.5 & 3.99$\times$4 & 187.64 & 12.5$\times$4\\
~ & ~ & FastAT & 3.61 & 2.54$\times$4 & 5.64 & 4.03$\times$4 & 46.29 & 10.4$\times$4\\
~ & ~ & MAE & 3.58 & 3.24$\times$4 & 7.33 & 3.27$\times$4 & 59.91 & 11.1$\times$4\\
~ & ~ & MIMIR & 4.09 & 3.12$\times$4 & 8.89 & 3.18$\times$4 & 61.22 & 11.1$\times$4\\
\midrule
\multirow{4}{*}{ViT-B} & \multirow{4}{*}{85.27} & PGD$_{10}$ AT & 30.1 & 5.39$\times$4 & 85.18 & 8.30$\times$4 & 451.39 & 22.1$\times$4\\
~ & ~ & FastAT & 10.23 & 5.36$\times$4 & 15.02 & 8.34$\times$4 & 113.44 & 19.8$\times$4\\
~ & ~ & MAE & 11.78 & 5.95$\times$4 & 23.67 & 5.95$\times$4 & 109.09 & 17.0$\times$4\\
~ & ~ & MIMIR & 13.11 & 6.08$\times$4 & 27.11 & 6.11$\times$4 & 113.31 & 17.0$\times$4\\
\midrule
\multirow{4}{*}{ConViT-S} & \multirow{4}{*}{27.05} & PGD$_{10}$ AT & 36.88 & 6.64$\times$4 & 74.75 & 12.19$\times$4 & 552.21 & 32.5$\times$4\\
~ & ~ & FastAT & 8.88 & 5.86$\times$4 & 15.27 & 10.62$\times$4 & 119.27 & 26.4$\times$4\\
~ & ~ & MAE & 7.33 & 10.6$\times$4 & 15.0 & 10.61$\times$4 & 135.49 & 27.5$\times$4\\
~ & ~ & MIMIR & 10.0 & 10.4$\times$4 & 20.0 & 10.54$\times$4 & 135.8 & 28.3$\times$4\\
\bottomrule
\end{tabular}
\end{table*}

\subsection{Mutual Information and HSIC}
\label{sec:mi}

MI measures the mutual dependence between two random variables, $X$ and $Y$. It can be decomposed as:
\begin{equation}
    \begin{aligned}
        I(X, Y) &= H(X) - H(X|Y),\\
        &=H(Y) - H(Y|X),\\
        &=H(X)+H(Y)-H(X,Y),
\end{aligned}
\end{equation}
where $H(X)$ and $H(Y)$ are the information entropies, $H(X|Y)$ and $H(Y|X)$ are the conditional entropies, and $H(X,Y)$ is the joint entropy of $X$ and $Y$.

Unfortunately, estimating MI in high-dimensional space is a difficult task since it may involve a precise estimation of the underlying data distribution $P_{(X,Y)}$ or $P_{(X)}$ and $P_{(Y)}$. 
To address this issue, the deterministic information bottleneck (DIB)~\cite{9414151} uses the recently proposed matrix-based R\'{e}nyi's $\alpha$-entropy functional $I_\alpha$~\cite{6954500,8787866}, which suggests similar quantities to $I(X,Y)$ in terms of the normalized eigenspectrum of the Hermitian matrix of the projected data in the reproducing kernel Hilbert space (RKHS), but avoids density estimation.

Specifically, given $N$ pairs of samples $(x_{i}, y_{i})_{i=1}^{N}$ (in our setup, $N$ refers to the mini-batch size), we can obtain two Gram (or kernel) matrices $K_x$ and $K_y$, for variables $X$ and $Y$, respectively, with
$(K_x)_{i,j}=\kappa_{x}(x_{i}, x_{j})$, $(K_y)_{i,j}=\kappa_{y}(y_{i}, y_{j})$, in which $\kappa_x$ and $\kappa_y$ are corresponding kernel functions. The information entropy of $X$ can be expressed as:
\begin{align}
\label{Renyi_entropy}
H_{\alpha}(X)&=\frac{1}{1-\alpha}\log_2 \left(\text{tr} (\tilde{K_x}^{\alpha})\right) \\ \nonumber
&=\frac{1}{1-\alpha}\log_{2}\left(\sum_{i=1}^{N}\lambda_i (\tilde{K_x})^{\alpha}\right),
\end{align}
where $\tilde{K}$ is the normalized version of $K$, i.e., $\tilde{K}=K/{\text{tr}(K)}$, and $\lambda _{i}(\tilde{K})$ denotes the $i$-th eigenvalue of $\tilde{K}$.

Further, the joint entropy for $X$ and $Y$ can be expressed as:
\begin{equation}\label{Renyi_joint_entropy}
H_{\alpha}(X,Y)=H_{\alpha}\left(\frac{K_x \circ K_y}{\text{tr}(K_x \circ K_y)}\right),
\end{equation}
where $K_x\circ K_y$  denotes the Hadamard product between the matrices $K_x$ and $K_y$.

Given Eqs.~\eqref{Renyi_entropy} and~\eqref{Renyi_joint_entropy}, the matrix-based R{\'e}nyi's $\alpha$-order mutual information $I_{\alpha}(X; Y)$ in analogy of Shannon's MI is given by:
\begin{equation}\label{Renyi_MI}
I_{\alpha}(X;Y)=H_{\alpha}(X)+H_{\alpha}(Y)-H_{\alpha}(X,Y).
\end{equation}
Throughout this paper, we use the radial basis function (RBF) kernel $\kappa(x_{i},x_{j})=\exp(-\frac{\|x_{i}-x_{j}\|^{2}}{2\sigma ^{2}})$ with kernel width $\sigma$ to obtain the Gram matrices. 

The Hilbert–Schmidt Independence Criterion (HSIC)~\cite{10.1007/11564089_7} is also a kernel-based dependence measure and is usually used as a surrogate of MI.
Formally, the HSIC is defined as the squared
norm of the cross-covariance operator $||C_{XY}||^2$:
\begin{equation}
\label{eq:estimator_hsic}
\begin{split}
	&\text{HSIC}_{P_{X,Y}}(X, Y) \\
	& = ||C_{XY}||^2\\
	& = \mathbb{E}_{xyx'y'}[\kappa_x(x,x')\kappa_{y'}(y,y')]\\
	& + \mathbb{E}_{xx'}[\kappa_x(x,x')]E_{yy'}[\kappa_y(y,y')]\\
	& -2\mathbb{E}_{xy}[\mathbb{E}_{x'}[\kappa_x(x,x')]\mathbb{E}_{y'}[\kappa_y(y,y')]],
\end{split}
\end{equation}
where $\kappa_x$ and $\kappa_y$ are kernel functions, $\mathbb{E}$ is the expectation, $x'$ and $y'$ are independent copies of $x$ and $y$, respectively.

Given $N$ pairs of samples $(x_{i}, y_{i})_{i=1}^{N}$, the empirical estimator of HSIC is given by:
\begin{equation}
    \widehat{\text{HSIC}}_{P_{X,Y}}(X, Y) = \frac{1}{N^2} \text{tr}\left( K_x H H_y H \right),
\end{equation}
in which $(K_x)_{i,j}=\kappa_{x}(x_{i}, x_{j})$, $(K_y)_{i,j}=\kappa_{y}(y_{i}, y_{j})$, and $H=I - \frac{1}{N} \mathbbm{1} \mathbbm{1}^T $ is the centering matrix.



\appendices
\section*{Artifact Appendix}

\setcounter{section}{0}

\subsection{Description \& Requirements}
MIMIR is a self-supervised pre-training strategy for adversarial robustness of Vision Transformers (ViTs). The training process consists of a pre-training stage and a fine-tuning stage. This artifact supports the experiments and findings presented in the paper by providing the necessary code and trained weights. We also provide scripts for setting up a Python environment and running experiments.

\subsubsection{How to access}
The artifact is available at the permanent repository: \url{https://doi.org/10.5281/zenodo.17807275}

\subsubsection{Hardware dependencies}
Training and evaluation require at least one CUDA-enabled GPU, but we strongly recommend using more GPUs. In our case, experiments using small datasets (CIFAR-10, Tiny-ImageNet) are performed on two GPUs (RTX A6000 or RTX A5000). Experiments using the large dataset (ImageNet-1K) are performed on eight GPUs (RTX A6000, RTX A5000, or H100).

\subsubsection{Software dependencies}
The artifact is tested on Ubuntu 22.04.5 LTS with Python 3.10.12 and CUDA 12.7. The training script is implemented with PyTorch 2.1.0. All package dependencies are listed in \texttt{requirements.txt}.

\subsubsection{Benchmarks}
We use three commonly used benchmark datasets to evaluate the artifact: CIFAR-10, Tiny-ImageNet, and ImageNet-1K. CIFAR-10 and Tiny-ImageNet are included in the artifact. ImageNet-1k requires accepting the terms of access.\footnote{\url{https://image-net.org/download.php}}

\subsection{Artifact Installation \& Configuration}
To install necessary dependencies, ensure Python and CUDA are available. Then go to the root path and execute:
\begin{verbatim}
$ pip install -r requirements.txt
\end{verbatim}

\subsection{Experiment Workflow}
The artifact requires three primary workflows: (1) Pre-training with MIMIR. (2) Fine-tuning the MIMIR-trained models. (3) Evaluating the fine-tuned models.
The \texttt{bash} scripts corresponding to each workflow are provided in the artifact.

\subsection{Major Claims}
\begin{itemize}
    \item (C1): MIMIR is effective for ViTs on CIFAR-10. Adversarial training on ViTs is known to be difficult in previous works. This statement is supported by E1, and the results are shown in~\autoref{tab:cifar_at}.
    \item (C2): MIMIR is effective while scaling up to ImageNet-1K. This statement is supported by E2, and the results are shown in~\autoref{tab:allonimagenet}.
    \item (C3): MIMIR is effective against unforeseen attacks. This statement is supported by E3, and the results are shown in~\autoref{tab:unkonwnattack}.
    \item (C4): MIMIR is effective against adaptive attacks. This statement is supported by E4, and the results are shown in~\autoref{tab:adaptiveeval}.
\end{itemize}

There are also other experiments, but they may take more than several days to complete. We include corresponding scripts to execute them in the artifact, but not in this ``Major Claims''.

\subsection{Evaluation}
Overall, the experiments involve four steps for training: 
\begin{enumerate}
    \item Activate the Python environment. By default, we use virtual Python environments, which can be created by the following command lines:
    \begin{verbatim}
    $ python3 -m venv your_env
    $ source your_env/bin/activate 
    $ pip install -r requirements.txt
    \end{verbatim}
    \item Pre-training is implemented in \texttt{pretrain.py}.
    \item Fine-tuning is implemented in \texttt{finetune.py}.
    \item Evaluation is implemented in \texttt{finetune.py}, and can be activated by the flag \texttt{--eval}. Evaluation for ImageNet-1K with the 5000 RobustBench testset is implemented in \texttt{eval\_advmae/sub\_imagenet\_eval.py}.
\end{enumerate}

\begin{itemize}
    \item (E1): MIMIR training on CIFAR-10 with ViT-S.
    \begin{verbatim}
    $ cd scripts
    $ bash train_cifar10.sh
    \end{verbatim}
    \item (E2): MIMIR training on ImageNet-1K with ViT-S.
    \begin{verbatim}
    $ cd scripts
    $ bash train_imagenet.sh
    \end{verbatim}
    \item (E3): Evaluation against unforeseen attacks on ImageNet-1K.
    \begin{verbatim}
    $ cd scripts
    $ bash eval_subimagenet.sh
    $ bash eval_imagenet_c.sh
    \end{verbatim}
    \item (E4): Evaluation against adaptive attacks.
    \begin{verbatim}
    $ cd scripts
    $ bash eval_cifar10_adap.sh
    $ bash eval_tiny_adap.sh
    \end{verbatim}
\end{itemize}



\end{document}